\DeclareMathOperator*{\argmin}{arg\,min}
\DeclareMathOperator{\E}{\mathbb{E}}
\newcommand{\R}{\mathbb{R}}
\newcommand{\N}{\mathbb{N}}
\newcommand{\Mcal}{\mathcal{M}}
\newcommand{\Pcal}{\mathcal{P}}
\newcommand{\Fcal}{\mathcal{F}}
\newcommand{\eps}{\varepsilon}
\newcommand{\ATE}{\operatorname{ATE}}
\theoremstyle{plain}
\newtheorem{theorem}{Theorem}[section]
\newtheorem{proposition}[theorem]{Proposition}
\newtheorem{lemma}[theorem]{Lemma}
\theoremstyle{definition}
\newtheorem{definition}[theorem]{Definition}
\newtheorem{assumption}[theorem]{Assumption}
\theoremstyle{remark}
\newtheorem{remark}[theorem]{Remark}
\newcommand{\ILT}{ILT}
\icmltitlerunning{Adjustment for Confounding using Pre-Trained Representations}
\begin{document}

\twocolumn[
\icmltitle{Adjustment for Confounding using Pre-Trained Representations}

% It is OKAY to include author information, even for blind
% submissions: the style file will automatically remove it for you
% unless you've provided the [accepted] option to the icml2025
% package.

% List of affiliations: The first argument should be a (short)
% identifier you will use later to specify author affiliations
% Academic affiliations should list Department, University, City, Region, Country
% Industry affiliations should list Company, City, Region, Country

% You can specify symbols, otherwise they are numbered in order.
% Ideally, you should not use this facility. Affiliations will be numbered
% in order of appearance and this is the preferred way.
\icmlsetsymbol{equal}{*}

\begin{icmlauthorlist}
\icmlauthor{Rickmer Schulte}{lmu,mcml}
\icmlauthor{David R\"ugamer}{lmu,mcml}
\icmlauthor{Thomas Nagler}{lmu,mcml}
\end{icmlauthorlist}

\icmlaffiliation{lmu}{Department of Statistics, LMU Munich, Munich, Germany}
\icmlaffiliation{mcml}{Munich Center for Machine Learning (MCML), Munich, Germany}

\icmlcorrespondingauthor{Rickmer Schulte}{schulte@stat.uni-muenchen.de}

% You may provide any keywords that you
% find helpful for describing your paper; these are used to populate
% the "keywords" metadata in the PDF but will not be shown in the document
\icmlkeywords{Machine Learning, ICML}

\vskip 0.3in
]

% this must go after the closing bracket ] following \twocolumn[ ...

% This command actually creates the footnote in the first column
% listing the affiliations and the copyright notice.
% The command takes one argument, which is text to display at the start of the footnote.
% The \icmlEqualContribution command is standard text for equal contribution.
% Remove it (just {}) if you do not need this facility.

\printAffiliationsAndNotice{}  % leave blank if no need to mention equal contribution
%\printAffiliationsAndNotice{\icmlEqualContribution} % otherwise use the standard text.

\begin{abstract}
% There is growing interest in extending average treatment effect (ATE) estimation to incorporate non-tabular data, such as images and text, which may act as sources of confounding. Neglecting these effects risks biased results and flawed scientific conclusions. However, incorporating non-tabular data necessitates sophisticated feature extractors, often in combination with ideas of transfer learning. In this work, we investigate how latent features from pre-trained neural networks can be leveraged to adjust for sources of confounding. We formalize conditions under which these latent features enable valid adjustment and statistical inference in ATE estimation, demonstrating results along the example of double machine learning. In this context, we also discuss critical challenges inherent to latent feature learning and downstream parameter estimation using those. As our results are agnostic to the considered data modality, they represent an important first step towards a theoretical foundation for the usage of latent representation from pre-trained models in ATE estimation.
%
There is growing interest in extending average treatment effect (ATE) estimation to incorporate non-tabular data, such as images and text, which may act as sources of confounding. Neglecting these effects risks biased results and flawed scientific conclusions. However, incorporating non-tabular data necessitates sophisticated feature extractors, often in combination with ideas of transfer learning. In this work, we investigate how latent features from pre-trained neural networks can be leveraged to adjust for sources of confounding. We formalize conditions under which these latent features enable valid adjustment and statistical inference in ATE estimation, demonstrating results along the example of double machine learning. We discuss critical challenges inherent to latent feature learning and downstream parameter estimation arising from the high dimensionality and non-identifiability of representations. Common structural assumptions for obtaining fast convergence rates with additive or sparse linear models are shown to be unrealistic for latent features. We argue, however, that neural networks are largely insensitive to these issues. In particular, we show that neural networks can achieve fast convergence rates by adapting to intrinsic notions of sparsity and dimension of the learning problem. 
%
%This includes theoretical conditions to achieve fast convergence rates for neural network-based non-parametric estimation. In particular, while standard assumptions such as sparsity are shown to be unrealistic for latent features, we build on ideas about intrinsic dimensions to derive novel convergence rate results for neural network based estimators. %by considering conditions tailored for high-dimensional non-tabular data. %Our work connects neural network-based estimation, latent features and transfer learning. Bridging these areas, we provide a unified framework that broadens the applicability of causal inference methods to modern, high-dimensional data domains.
\end{abstract}

\section{Introduction}\label{sec:intro}

Causal inference often involves estimating the average treatment effect (ATE), which represents the causal impact of an exposure on an outcome. Under controlled study setups of randomized controlled trials (RCTs), valid inference methods for ATE estimation are well established \citep{deaton2018understanding}. However, RCT data is usually scarce and in some cases even impossible to obtain, either due to ethical or economic reasons. This often implies relying on observational data, typically subject to (unmeasured) confounding---(hidden) factors that affect both the exposure and the outcome. To overcome this issue of confounding and to obtain unbiased estimates, several inferential methods have been developed to properly adjust the ATE estimation for confounders. One approach that has garnered significant attention in recent years is the debiased/double machine learning (DML) framework \cite{chernozhukov2017double, chernozhukov2018double}, which allows the incorporation of machine learning methods to adjust for non-linear or complex confounding effects in the ATE estimation. DML is usually applied in the context of tabular features and was introduced for ML methods tailored to such features. However, confounding information might only be present in non-tabular data, such as images or text. 
\paragraph{Non-tabular Data as Sources of Confounding} 

Especially in medical domains, imaging is a key component of the diagnostic process. Frequently, CT scans or X-rays are the basis to infer a diagnosis and a suitable treatment for a patient. However, as the information in such medical images often also affects the outcome of the therapy, the information in the image acts as a confounder.  Similarly, treatment and health outcomes are often both related to a patient's files, which are typically in text form. Consequently, ATE estimation based on such observational data will likely be biased if the confounder is not adequately accounted for. Typical examples would be the severity of a disease or fracture. The extent of a fracture impacts the likelihood of surgical or conservative therapy, and the severity of a disease may impact the decision for palliative or chemotherapy. In both cases, the severity will likely also impact the outcome of interest, e.g., the patient's recovery rate. Another famous example is the Simpson's Paradox observed in the kidney stone treatment study of \citet{charig1986comparison}. The size of the stone (information inferred from imaging) impacts both the treatment decision and the outcome, which leads to flawed conclusions about the effectiveness of the treatment if confounding is not accounted for \citep{julious1994confounding}.

\paragraph{Contemporary Applications}

While the DML framework provides a solution for non-linear confounding, previous examples demonstrate that modern data applications require extending ATE estimation to incorporate non-tabular data. In contrast to traditional statistical methods and classical machine learning approaches, information in non-tabular data usually requires additional feature extraction mechanisms to condense high-dimensional inputs to the relevant information in the data. This is usually done by employing neural network-based approaches such as foundation models or other pre-trained neural networks. %, to capture relevant confounding factors while minimizing extraneous noise. 
While it may seem straightforward to use such feature extractors to extract latent features from non-tabular data and use the resulting information in classical DML approaches, we show that this necessitates special caution. In particular, incorporating such features into ATE estimation requires overcoming previously unaddressed theoretical and practical challenges, including non-identifiability, high dimensionality, and the resulting limitations of standard assumptions like sparsity.

% \paragraph{Our Contributions}

% We introduce a novel framework for leveraging latent feature extraction in ATE estimation with non-tabular data, grounded in a double machine learning context. Our approach 1) formalizes conditions under which pre-trained neural networks can reliably capture confounding effects and ensure valid statistical inference. By addressing the challenges of high dimensionality and representation non-identifiability, we 2) derive theoretical guarantees for fast convergence rates in neural network-based estimation. Unlike conventional approaches, we 3) consider assumptions specifically tailored for high-dimensional, non-tabular data, extending existing methodologies to a broader set of applications. Our work not only advances the theoretical understanding of causal inference in this context but also provides practical strategies for integrating modern machine learning tools into ATE estimation.
%
\begin{figure}[t]
\centering
\begin{subfigure}[t]{0.45\columnwidth}
    \centering
    \includegraphics[width=0.9\columnwidth]{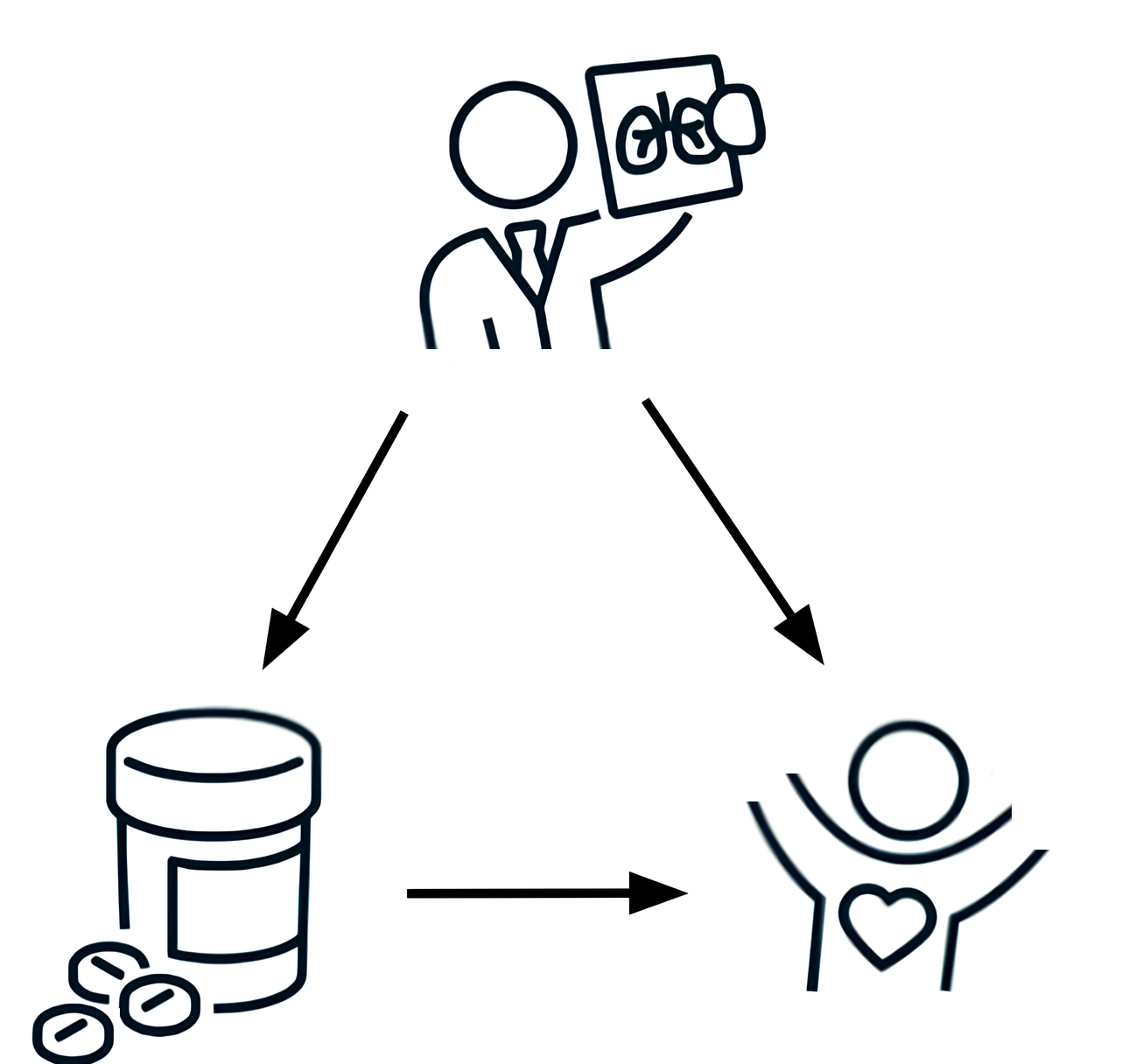}
    % \caption{Schematic visualization of confounding effect.}
    % \label{fig:setup-img}
\end{subfigure}
% \hfill
\begin{subfigure}[t]{0.45\columnwidth}
    \centering
     \raisebox{0.35\height}{
    \tikzset{> = stealth}
    \tikz{
        \node[circle,draw] (t) at (0,0) {$T$};
        \node[circle,draw] (y) at (2,0) {$Y$};
        \node[circle,draw,dashed] (w) at (1,1) {$W$};
        \path[->,] (t) edge (y);
        \path[->,] (w) edge (t);
        \path[->, ] (w) edge (y);
    }
    % \caption{Effect of treatment $T$ on outcome $Y$ is confounded by non-tabular data $W$.}
    }
    % \label{fig:dag-unmea-conf}
\end{subfigure}
\vspace{-3mm}
\caption{Schematic (left) and DAG visualization (right) of the effect of a treatment $T$ on outcome $Y$ that is confounded by non-tabular data $W$ (e.g.\ information from medical imaging).}
\label{fig:dag-unmea-conf}
\end{figure}
\paragraph{Problem Setup}
Given $n$ independent and identically distributed (i.i.d.) observations of $(T,W,Y)$, we are interested in estimating the ATE of a binary variable $T\in \{0,1\}$ on some outcome of interest $Y\in \mathbb{R}$ while adjusting for some source of confounding $W\in\mathbb{W}$ (cf.~\cref{fig:dag-unmea-conf}). $W$ is pre-treatment %\footnote{This circumvents the issue of conditioning on a collider.} 
data from some potentially complex sampling space $\mathbb{W}$ that is assumed to be \textit{sufficient} for adjustment. The definition of sufficiency will be formalized in \cref{sec:cond_lat_features}. Under \textit{positivity} and \textit{consistency} assumption---the standard assumptions in causality---the target parameter of interest can be identified as
\begin{equation}\label{eq:ate}
    \ATE \coloneqq \E[\E[Y | T=1, W] - \E[Y | T=0, W]].
\end{equation}
While there are many well-known ATE estimators, most require to estimate either the outcome regression function
\begin{equation}\label{eq:outcome_reg}
    g(t, w) \coloneqq \E[Y |T = t, W = w]
\end{equation}
or the propensity score
\begin{equation}\label{eq:prop_score}
    m(t|w) \coloneqq \mathbb{P}[T = t | W = w]
\end{equation}
at parametric rate $\sqrt{n}$.
Doubly robust estimators such as the Augmented Inverse
Probability Weighted, the Targeted Maximum Likelihood Estimation or the DML approach estimate both \textit{nuisance functions} $g$ and $m$. These methods thus only require the product of their estimation errors to converge at $\sqrt{n}$-rate \cite{robins1995semiparametric,vdL2006targeted,vdL2011targeted,chernozhukov2017double,chernozhukov2018double}. However, even this can be hard to achieve, given the \textit{curse of dimensionality} when considering the high-dimensionality of non-tabular data $W$ such as images. Especially given the often limited number of samples available in many medical studies involving images, estimating $m$ and $g$ as a function of $W$, e.g., via neural networks, might not be feasible or overfit easily. To cope with such issues, a common approach is to adopt ideas from transfer learning and use pre-trained neural networks.

\paragraph{Our Contributions}
In this paper, we discuss under what conditions \textit{pre-trained representations} $Z\coloneqq\varphi(W)$ obtained from pre-trained neural networks $\varphi$ can replace $W$ in the estimation of nuisance functions \eqref{eq:outcome_reg} and \eqref{eq:prop_score}. Although the dimensionality of $Z$ is usually drastically reduced compared to $W$, one major obstacle from a theoretical point of view is that representations can only be learned up to invertible linear transformations (e.g., rotations). We argue that common assumptions allowing fast convergence rates, e.g., \textit{sparsity} or \textit{additivity} of the nuisance function, are no longer reasonable in such settings. In contrast, we build on the idea of low \textit{intrinsic dimensionality} of the pre-trained representations. Combining invariance of intrinsic dimensions and functional smoothness with structural sparsity, we establish conditions that allow for sufficiently fast convergence rates of nuisance function estimation and, thus, valid ATE estimation and inference.
Our work, therefore, not only advances the theoretical understanding of causal inference in this context but also provides practical insights for integrating modern machine learning tools into ATE estimation.

\section{Related Work}
The DML framework was initially proposed for tabular features in combination with classical machine learning methods \cite{chernozhukov2017double, chernozhukov2018double}. Several theoretical and practical extensions to incorporate neural networks have been made with a focus on tabular data \cite{shi2019adapting, farrell2021deep, chernozhukov2022riesznet, zhangbradic2024multistage}. Additionally, there is a growing body of research that aims to incorporate non-tabular data as adjustment into DML \cite{veitch2019networks, veitch2020textembeddings, klaassen2024doublemldeep}. While the latter directly incorporates the non-tabular data in the estimation, none of them discuss conditions that would theoretically justify fast convergence rates necessary for valid inference. A different strand of research instead uses either derived predictions \cite{zhang2023debiasing,battaglia2024inference,jerzak2022estimating,jerzak2023integrating,jerzak2023image} or proxy variables \cite{kuroki2014measurement, kallus2018causal, miao2018identifying, mastouri2021proximal, dhawan2024causal_nlp} in downstream estimation. In contrast to these proposals, we consider the particularly broad setup of using pre-trained representations for confounding adjustment. Given the increasing popularity of pre-trained models, \citet{dai2022embedding} and \citet{christgau2024efficient} establish theoretical conditions justifying the use of derived representations in downstream tasks, which we will review in the next section. The idea of a low intrinsic dimensionality of non-tabular data and its latent representations to explain the superior performance of deep neural networks in non-tabular data domains has been explored and validated both empirically \citep{gong2019intrinsic, ansuini2019intrinsic, pope2021intrinsic, konz2024intrinsic} and theoretically \citep{chen2019efficient, schmidt2019deep, nakada2020adaptive}. By connecting several of those theoretical ideas and empirical findings, our work establishes a set of novel theoretical results and conditions that allow to obtain valid inference when using pre-trained representations in adjustment for confounding.

\section{Properties of Pre-Trained Representations}

Given the high dimensional nature of non-tabular data, together with the often limited number of samples available (especially in medical domains), training feature extractors such as deep neural networks from scratch is often infeasible. This makes the use of latent features from pre-trained neural networks a popular alternative \citep{erhan2010does}. 
In order to use pre-trained representations for adjustment in the considered ATE setup, certain conditions regarding the representations are required. 
\vspace{-1mm}
\subsection{Sufficiency of Pre-Trained Representations}\label{sec:cond_lat_features}
%
%In order to use pre-trained representations for adjustment in the considered ATE setup, certain conditions regarding the representations are required. 
Given any pre-trained model $\varphi$, %, that by definition was 
trained independently of $W$ on another dataset, we denote the learned (last-layer) representations as $Z\coloneqq\varphi(W)$. Due to the non-identifiability of $Z$ up to certain orthogonal transformations, further discussed in \cref{sec:non_iden_rep}, we define the following conditions for the induced equivalence class of representations $\mathcal{Z}$ following \citet{christgau2024efficient}. For this, we abstract the adjustment as conditioning on information in the ATE estimation, namely conditioning on the uniquely identifiable information contained in the sigma-algebra $\sigma(Z)$ generated by any $Z \in \mathcal{Z}$ (see also \cref{sec:aux_res} for a special case).

\begin{definition}{\label{def:descriptions_cond}}[\citet{christgau2024efficient}]
    %A sub $\sigma$-algebra $\mathcal{Z}\coloneqq \sigma(Z) \subseteq \sigma(W)$ is called a \textit{description} of $U$. As $\mathcal{Z}$ is the $\sigma$-algebra generated by $Z$, conditional probabilities and expectations are the same if we condition on either of the two. 
    Given the joint distribution $P$ of $(T, W, Y)$, sigma-algebra $\sigma(Z)$ of $Z$, and $t \in \{0, 1\}$, we say that any $Z \in \mathcal{Z}$ is 
    %For each $t \in \mathbb{T}$ and $P\in\mathcal{P}$, and with $\mathcal{Z}$ a description of $U$
    \vspace{-1mm}
    \begin{enumerate}[label=(\roman*)]
        \item \textbf{$P$-valid} if:% ($P\text{-}a.s.$):
        \begin{equation*}
            \vspace{-2mm}
            \E_P[\E_P[Y | T=t, \sigma(Z)]] = \E_P[\E_P[Y | T=t, W]]
        \end{equation*}
        \item \textbf{$P$-OMS} (Outcome Mean Sufficient) if:% ($P\text{-}a.s.$):
        \begin{equation*}
            \vspace{-2mm}
            \E_P[Y | T=t, \sigma(Z)] = \E_P[Y | T=t, W] \quad \text{($P$-a.s.)}
        \end{equation*}
        \item \textbf{$P$-ODS} (Outcome Distribution Sufficient) if:
        \begin{equation*}
            \vspace{-2mm}
            Y \perp_P W | T, Z.
        \end{equation*}
    \end{enumerate}
    %In case one of the above conditions holds for all $P\in\mathcal{P}$, then $P$ is replaced by $\mathcal{P}$ in the above conditions, e.g. $\mathcal{P}$-OMS.
\end{definition}

%\begin{remark}
%    In the above definition, we considered the description $\mathcal{Z} = \sigma(Z)$ to be the $\sigma$-algebra generated by the representation $Z$. As noted by \cite{christgau2024efficient}, descriptions as defined in \cref{def:descriptions_cond} generally do not need to be $\sigma$-algebras generated by an actual representation. We disregard such differentiation as they are not particularly important for our investigation.
%\end{remark}

\begin{remark}
    If $Z\in\mathcal{Z}$ is $P$-ODS, it is also called a \textit{sufficient embedding} in the literature % condition resembles what  
    \citep{dai2022embedding}. %subsume under the term .
\end{remark}

The three conditions in \cref{def:descriptions_cond} place different restrictions on the nuisance functions \eqref{eq:outcome_reg} and \eqref{eq:prop_score}. While $P$-ODS is most restrictive (followed by $P$-OMS) and thus guarantees valid downstream inference more generally, the strictly weaker condition of $P$-validity is already sufficient (and in fact necessary) to guarantee that $Z \in \mathcal{Z}$ is a \textit{valid adjustment set} in the ATE estimation \cite{christgau2024efficient}. Thus, any pre-trained representation $Z$ considered in the following is assumed to be at least $P$-valid.
% \vspace*{1mm}
\begin{figure}[H]
    \centering
    \includegraphics[width=0.5\textwidth]{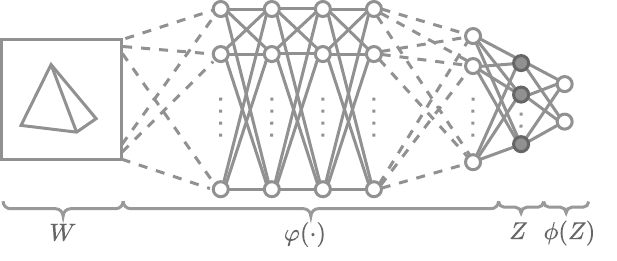}
    \vspace*{-8mm}
    \caption{\textit{Schematic visualization of a pre-trained neural network $\varphi(\cdot)$ and representations $Z=\varphi(W)$.
    }}\label{fig:nn_w_modelhead}
\end{figure}
\vspace{-5mm}

\subsection{Non-Identifiability under ILTs} \label{sec:non_iden_rep}

%The sigma-algebra $\sigma(Z)$ characterizes the information contained in $Z$. This information does not change under bijective transformations of $Z$, so the representation $Z$ itself is not identifiable. 
In practice, the representation $Z = \varphi(W)$ is extracted from some layer of a pre-trained neural network $\varphi$. This information does not change under bijective transformations of $Z$, so the representation $Z$ itself is not identifiable. We argue that, in this context, non-identifiability with respect to invertible linear transformations (ILTs) is most important. 
Suppose $Z = \varphi(W)$ is extracted from a deep network's $\ell$th layer. During pre-training the network further processes $Z$ through a model head $\phi(Z)$, as schematically depicted in \cref{fig:nn_w_modelhead}. The model head usually has the form $\phi^{>\ell}(AZ + b)$ where $A, b$ are the weights and biases of the $\ell$th layer, and $\phi^{>\ell}$ summarizes all following computations. Due to this structure, any bijective linear transformation $Z \mapsto QZ$ can be reversed by the weights $A \mapsto \tilde A= AQ^{-1}$ so that the networks $\phi^{>\ell}(A \cdot + b)$ and $\phi^{>\ell}(\tilde A Q \cdot + b)$ have the same output. 

\begin{definition}[Invariance to \ILT s]\label{def:ilt}
    Given a latent representation $Z$, we say that a model (head) $\phi_\xi$ with parameters $\xi\in\Xi$ is non-identifiable up to invertible linear transformations if for any invertible matrix $Q\in\mathbb{R}^{d\times d}$ $\,\exists\tilde{\xi}\in\Xi: \phi_\xi(QZ) = \phi_{\tilde{\xi}}(Z)$.
\end{definition}

Important examples of \ILT s are rotations, permutations, and scalings of the feature space as well as compositions thereof.

\begin{table*}[ht]
\centering
\begin{tikzpicture}
\node [inner sep=1ex, fill=white, draw=black, rounded corners=0.5em] {
\begin{tabularx}{\textwidth}{X|X|X|X}  % X columns auto-expand to fill width
  \rowcolor{gray!20} \textbf{Smoothness} & \textbf{+ Additivity} & \textbf{+ Sparsity \& Linearity} &  \textbf{Intrinsic Dimension} \\
  \midrule
  \citet{stone1982optimal} & \citet{stone1985additive} & \citet{raskutti2009lower} & {\small \citet{bickel2007local}} \\
  \bottomrule
  ${O}(n^{-\frac{s}{2s + d}})$ & ${O}(n^{-\frac{s}{2s + 1}})$ & ${O}(\sqrt{p \log(d) / n})$,  $p\ll d$ & ${O}(n^{-\frac{s}{2s + d_{\Mcal}}})$,  $d_{\Mcal}\mkern-4mu\ll\mkern-2mud$ 
\end{tabularx}
};
\end{tikzpicture}
\vspace{-7mm}
\caption{Assumptions and related minimax convergence rates of the estimation error}
\label{tab:minimax_rates}
\end{table*}
\section{Estimation using Pre-Trained Representations}\label{sec:est_prob}
The previous section discussed sufficient and necessary (information theoretic) conditions for pre-trained representations, justifying their usage for adjustment in downstream tasks. The following section will discuss aspects of the functional estimation in such adjustments. Valid statistical inference in downstream tasks usually requires fast convergence of nuisance function estimators. However, obtaining fast convergence rates in high-dimensional estimation problems is particularly difficult. We argue that some commonly made assumptions are unreasonable due to the non-identifiability of representations. We discuss this in the general setting of nonparametric estimation as described in the following.

\paragraph{The Curse of Dimensionality}
The general problem in nonparametric regression is to estimate some function $f$ in the regression model 
\begin{equation}\label{eq:nonparametric_prob}
    Y = f(X) + \epsilon
\end{equation}
with outcome $Y \in \mathbb{R}$, features $X \in \mathbb{R}^{d}$, and error \mbox{$\epsilon \sim \mathcal{N}(0,\,\sigma^{2})$}. %some mean-zero and finite variance error $\epsilon$. 
The minimax rate for estimating Lipschitz functions is known to be $n^{-\frac{1}{2  + d}}$ \cite{stone1982optimal}. This rate becomes very slow for increasing $d$, known as the \textit{curse of dimensionality}. Several additional structural and distributional assumptions are commonly encountered to obtain faster convergence rates in high dimensions.

\subsection{Structural Assumption I: Smoothness}

A common structural assumption is the smoothness of the function $f$ in \eqref{eq:nonparametric_prob}, i.e., the existence of $s$ bounded and continuous derivatives. Most convergence rate results assume at least some level of smoothness (see \cref{tab:minimax_rates}). The following lemma verifies that this condition is also preserved under \ILT s.

% \begin{figure*}%
% \vspace*{-4mm}
%     \centering
%     \subfloat[Coefficients of Logistic reg. on latent features (rotated vs. non-rotated). Both models have the same accuracy of 95.6\%.]{{\includegraphics[width=0.48\linewidth]{fig/coef_dist.png} }}
%     \subfloat[Non-zero coefficients of Logistic reg. on latent features (with increasing number of random feature rotations).]{{\includegraphics[width=0.48\linewidth]{fig/nonzero_coefs.png} }}
%     \label{fig:example}%
%     \vspace*{-4mm}
% \end{figure*}

\begin{lemma}[Smoothness Invariance under \ILT s]
\label{lem:smoothness_invariance}
Let \(D \subseteq \mathbb{R}^d\) be an open set, \(f : D \to \mathbb{R}\) be an $s$-smooth-function on \(D\), and $Q$ by any \ILT. Then $h = f \circ Q^{-1}\colon Q(D) \to \mathbb{R}$ is also $s$-smooth on the transformed domain \(Q(D)\).
\end{lemma}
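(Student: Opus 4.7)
The plan is to use the chain rule iteratively. Since $Q^{-1}$ is linear, its Jacobian is the constant matrix $Q^{-1}$, so differentiating $h = f \circ Q^{-1}$ only spreads derivatives of $f$ against constant coefficients. This means each partial derivative of $h$ of order $k \le s$ is a finite linear combination of partial derivatives of $f$ of order $k$, with coefficients that are polynomial expressions in the entries of $Q^{-1}$. Boundedness and continuity of the derivatives of $f$ then transfer to $h$.

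\textbf{Key steps.} First, I would note that $Q$ is a homeomorphism of $\mathbb{R}^d$, so $Q(D)$ is open and $h$ is well-defined on $Q(D)$. Next, writing $x = Q^{-1} y$ and letting $A = Q^{-1}$ with entries $A_{ij}$, the chain rule gives
\begin{equation*}
\frac{\partial h}{\partial y_i}(y) \;=\; \sum_{j=1}^{d} A_{ji}\, \frac{\partial f}{\partial x_j}(A y).
\end{equation*}
Iterating this computation, for any multi-index $\alpha$ with $|\alpha| = k \le s$,
\begin{equation*}
\partial^\alpha h(y) \;=\; \sum_{|\beta| = k} c_{\alpha,\beta}(A)\, (\partial^\beta f)(A y),
\end{equation*}
where the coefficients $c_{\alpha,\beta}(A)$ are fixed polynomials in the entries of $A$ (and hence bounded constants, once $Q$ is fixed). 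I would verify this formula by induction on $k$, using that differentiating a constant-coefficient linear combination of $(\partial^\beta f)\circ A$ produces another such combination of order $k+1$ derivatives.

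\textbf{Conclusion.} From the above display, each $\partial^\alpha h$ for $|\alpha| \le s$ is a finite sum of products of constants with continuous functions (the derivatives $\partial^\beta f$ composed with the continuous map $A$), so $\partial^\alpha h$ is continuous on $Q(D)$. For boundedness, since $\partial^\beta f$ is bounded on $D$ and $Ay$ ranges over $D$ as $y$ ranges over $Q(D)$, the triangle inequality yields
\begin{equation*}
\sup_{y \in Q(D)} \bigl| \partial^\alpha h(y) \bigr| \;\le\; \sum_{|\beta| = |\alpha|} |c_{\alpha,\beta}(A)|\, \sup_{x \in D} \bigl| \partial^\beta f(x) \bigr| \;<\; \infty.
\end{equation*}
This establishes that $h$ has $s$ bounded and continuous derivatives on $Q(D)$, i.e.\ $h$ is $s$-smooth.

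\textbf{Main obstacle.} There is no serious obstacle; the only mild care needed is in bookkeeping for the multi-index chain rule and in confirming that the precise definition of ``$s$-smooth'' used in the paper (boundedness plus continuity of all partials up to order $s$, versus a Hölder-type condition on the top-order derivative) is preserved. If the definition includes a Hölder condition on the $s$-th derivative, one additionally notes that composition with the Lipschitz map $Q^{-1}$ preserves Hölder continuity of the same exponent, which follows from the same linear-coefficient expression above and the estimate $\|Ay - Ay'\| \le \|A\|\, \|y - y'\|$.
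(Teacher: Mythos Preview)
Your proposal is correct and follows essentially the same route as the paper: both argue via the multivariate chain rule that composing with the $C^\infty$ linear map $Q^{-1}$ preserves $s$-smoothness, and both note that $Q(D)$ is open. Your version is more explicit---you write out the iterated chain-rule formula and, in particular, verify the \emph{boundedness} part of the paper's definition of $s$-smooth, which the paper's own proof leaves implicit.
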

The proof of \cref{lem:smoothness_invariance} and subsequent lemmas of this section are given in \cref{app:furthermath}.

% \begin{remark}
%     Intuitively, the result of \cref{lem:smoothness_invariance} says that the level of smoothness of \(f\) is preserved under arbitrary \ILT\  $G \in \mathcal{G}$ of the feature coordinates. As noted previously, this result also holds for \(Q \) corresponding to a reflection, permutation or scaling, given that all define orthogonal transformations. 
% \end{remark}

The lemma shows that a certain level of smoothness of a function defined on latent representations may reasonably be assumed due to its invariance to \ILT s. If the feature dimension is large, however, an unrealistic amount of smoothness would be required to guarantee fast convergence rates (e.g., of order $n^{-1/4}$). This necessitates additional structural or distributional assumptions.

\subsection{Structural Assumptions II: Additivity \& Sparsity}
The common structural assumption is that $f$ is \textit{additive},
$f(x) = \sum_{j=1}^d f_j(x_j),$ i.e., the sum of univariate $s$-smooth functions. In this case, the minimax convergence rate reduces to $n^{-\frac{s}{2 s + 1}}$ \citep{stone1985additive}.
Another common approach is to rely on the idea of \textit{sparsity}. Assuming that $f$ is \mbox{$p$-sparse} implies that it only depends on $p < \text{min}(n, d)$ features. 
%This is especially important in the case of high-dimensional regression ($n\ll d$), where estimation might otherwise be infeasible. 
In case one further assumes the univariate functions to be linear in each feature, i.e.\ $f(x) = \sum_{j=1}^p \beta_j x_j$ with coefficient $\beta_j\in \mathbb{R}$, the optimal convergence rate reduces to $\sqrt{p \log(d/p) /{n}}$ \cite{raskutti2009lower}.

% \paragraph{Non-invariance under \ILT s} %After reviewing the conditions of additivity and sparsity, we will next investigate whether these are preserved under \trafos  of the features. 
It can easily be shown that the previously discussed conditions are both preserved under permutation and scaling. But as the following lemma shows, sparsity and additivity of $f$ are (almost surely) not preserved under generic ILTs such as rotations.

\begin{lemma}[Non-Invariance of Additivity and Sparsity under \ILT s]
\label{lem:noninvariance}
Let $f: \mathbb{R}^d \to \mathbb{R}$ be a function of $x \in \mathbb{R}^d$. We distinguish between two cases:
\begin{enumerate}[label=(\roman*)]
    \item \textbf{Additive}:  $f(x) = \sum_{j=1}^d f_j(x_j),$
    with univariate functions \mbox{$f_j: \mathbb{R} \to \mathbb{R}$}, and at least one $f_j$ being non-linear.
    
    \item \textbf{Sparse Linear}: 
        $f(x) = \sum_{j=1}^d \beta_j x_j,$
    where $\beta_j \in \mathbb{R}$ and at least one (but not all) $\beta_j = 0$.
\end{enumerate}
%
% Let $Q$ be any orthogonal (rotation) matrix. Define $ h: \mathbb{R}^d \to \mathbb{R}$ on the rotated coordinates $\tilde{x} \coloneqq Q x$ as
% \begin{equation*}
%     h(\tilde{x}) = f(Q^{-1} \tilde{x}).
% \end{equation*}
%
Then, for almost every $Q$ drawn from the Haar measure on the set of \ILT s, it holds:
\begin{enumerate}[label=(\roman*)]
    \item If $f$ is additive, then $h = f \circ Q^{-1}$ is \emph{not additive}.
    \item If $f$ is sparse linear, then $h = f \circ Q^{-1}$ is \emph{not sparse}.
\end{enumerate}
\end{lemma}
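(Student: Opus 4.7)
The plan is to treat the two parts separately: part (ii) reduces to a one-line linear-algebra statement about the vector $Q^{-\top}\beta$, while part (i) uses mixed second derivatives to force strong structural constraints on $Q$ from additivity in two different bases.

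For the sparse linear case (ii), a direct computation gives $h(z) = f(Q^{-1}z) = (Q^{-\top}\beta)^\top z$, so $h$ is linear with coefficient vector $\tilde\beta \coloneqq Q^{-\top}\beta$. Hence $h$ is sparse iff some component of $\tilde\beta$ vanishes, i.e., $Q$ lies in $S_j \coloneqq \{Q \in GL(d) : e_j^\top Q^{-\top}\beta = 0\}$ for at least one index $j$. Because $\beta \neq 0$, each $S_j$ is cut out by a single non-trivial polynomial equation in the entries of $Q^{-1}$, hence is a proper algebraic subvariety of $GL(d)$ with Haar measure zero. The finite union $\bigcup_{j=1}^d S_j$ is null as well, which yields the claim.

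For the additive case (i), set $A \coloneqq Q^{-1}$ and work under $C^2$ regularity (consistent with the smoothness discussion in the preceding subsection; the merely measurable case follows by replacing pointwise derivatives with second-order finite differences). Suppose for contradiction that $h = f\circ Q^{-1}$ is additive on an open set, so $h(z) = \sum_k h_k(z_k)$. Computing $\partial^2 h/\partial z_i\partial z_k$ from both representations yields, for $i \neq k$,
\begin{equation*}
0 \;=\; \sum_{j=1}^d f_j''((Az)_j)\,A_{ji}A_{jk},
\end{equation*}
i.e., $A^\top D_x A$ must be diagonal for every $x = Az$ in an open set, where $D_x = \operatorname{diag}(f_1''(x_1),\dots,f_d''(x_d))$. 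The pivotal step is to vary only the $j$th coordinate of $x$: taking a difference gives
\begin{equation*}
A^\top(D_x - D_{x'})A \;=\; \bigl(f_j''(x_j) - f_j''(x'_j)\bigr)\, a_j a_j^\top,
\end{equation*}
where $a_j$ denotes the $j$th row of $A$. If $f_j$ is non-linear, the scalar prefactor is nonzero for some $x_j,x_j'$, so the rank-one matrix $a_j a_j^\top$ is both nonzero and diagonal, forcing $a_j$ to have at most one nonzero entry.

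Since at least one $f_j$ is non-linear by assumption, $A$ must therefore have some row proportional to a standard basis vector. This places $A$ in a finite union of lower-dimensional algebraic subvarieties of $GL(d)$ (one subvariety per choice of non-linear index $j$ and surviving-column $k$), which has Haar measure zero. Because inversion is a diffeomorphism of $GL(d)$ and preserves null sets, $Q = A^{-1}$ also lies in a Haar-null subset of the \ILT s, as required. The main obstacle is the rank-one reduction: the step from ``$A^\top D_x A$ is diagonal for all $x$'' to ``the row of $A$ corresponding to any non-linear $f_j$ is a generalized-permutation row'' is what converts a functional-equation hypothesis into a genuine codimension statement on $GL(d)$, and it is precisely where the non-linearity assumption is consumed; the remainder of the argument is bookkeeping.
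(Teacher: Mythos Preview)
Your treatment of part~(ii) coincides with the paper's: both compute the new coefficient vector $\tilde\beta = Q^{-\top}\beta$ and note that each vanishing-coordinate condition is a nontrivial polynomial constraint on $GL(d)$, hence Haar-null.

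For part~(i) your route differs from, and is more careful than, the paper's. The paper writes $h(\tilde x)=\sum_j f_j(\tilde q_j^\top\tilde x)$ (with $\tilde q_j$ the rows of $Q^{-1}$) and argues that unless the row $\tilde q_1$ corresponding to the nonlinear $f_1$ is a multiple of a standard basis vector---a Haar-null event---the term $f_1(\tilde q_1^\top\tilde x)$ depends nonlinearly on several coordinates, ``implying that $h$ is not additive.'' That last implication is asserted rather than proved, and can in fact fail for particular $Q$ (e.g.\ $f_j(x)=x^2$ for all $j$ and $Q$ orthogonal gives $h(\tilde x)=\|\tilde x\|^2$, which \emph{is} additive even though no $\tilde q_j$ is a basis vector). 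Your mixed-partial computation earns the same conclusion honestly: from ``$A^\top D_x A$ diagonal for all $x$'' you isolate the row $a_j$ by differencing in $x_j$ and force it to be a generalized-permutation row. This is the same Haar-null constraint the paper invokes, but now rigorously derived and robust against cancellations among the remaining terms.

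One small gap remains in your version: the differencing step requires $f_j''$ to be non-constant, which fails when $f_j$ is a pure quadratic. The patch is immediate---if every nonlinear $f_j$ has $f_j''\equiv c_j\neq 0$, then $D_x\equiv D$ is a fixed nonzero diagonal matrix and the single condition ``$A^\top D A$ diagonal'' already cuts out a proper algebraic subvariety of $GL(d)$, so no differencing is needed in that case.
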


Given the non-identifiability of representations with respect to \ILT s and the non-invariance result of \cref{lem:noninvariance}, any additivity or sparsity assumption about the target function $f$ of the latent features seems unjustified. An example of this rotational non-invariance of sparsity is given in \cref{fig:noninvar_sparsity}. This also implies that learners such as the lasso (with underlying sparsity assumption), tree-based methods that are based on axis-aligned splits (including corresponding boosting methods), and most feature selection algorithms are not \ILT-invariant. Further examples can be found in \citet{ng2004feature}.
%
% In general, while linear regression is \trafo-invariant, this does not hold anymore for $L_1$-regularized regression approaches due to the aforementioned sparsity property. Similarly,  Further examples can be found in \cite{ng2004feature}. Fully connected neural networks, in contrast, 
%     \item Not \trafo-invariant: $L_1$ penalized regression e.g. linear regression (Lasso) or Logistic regression \citep{ng2004feature}, 

% \vspace*{1mm}
\begin{figure}[t]
    \centering
    \includegraphics[width=0.98\columnwidth]{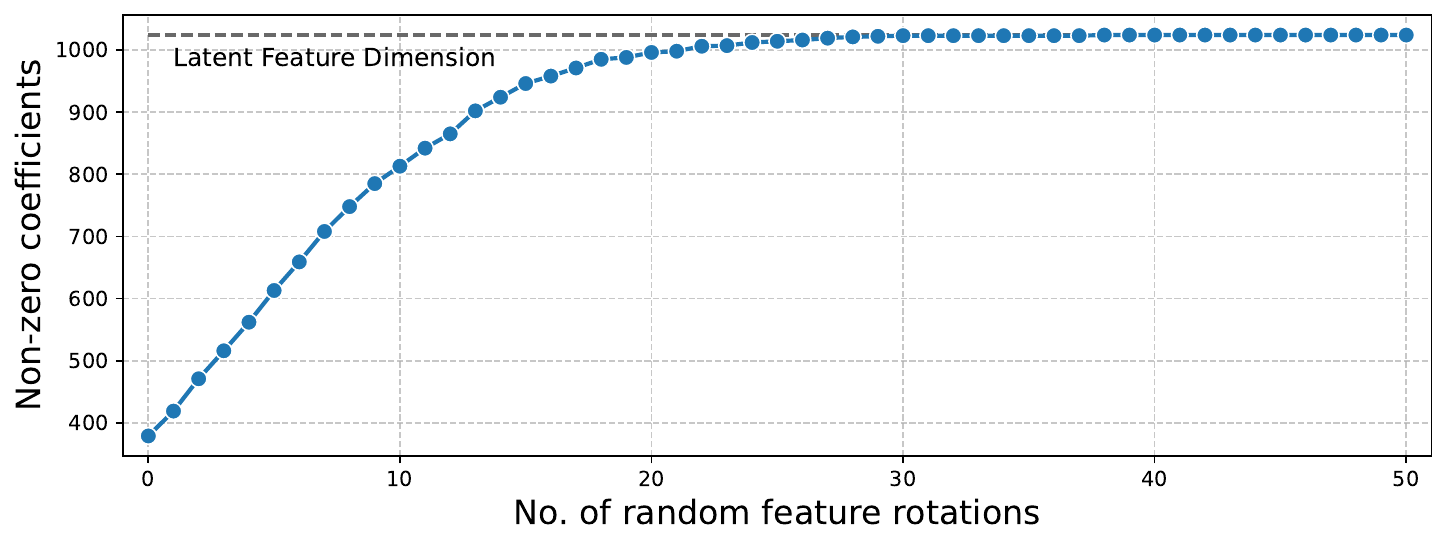}
    \vspace*{-2mm}
    \caption{\textit{Non-zero coefficients of a linear classifier on latent features, showing that sparsity is lost with an increasing number of random feature rotations.
    }}\label{fig:noninvar_sparsity}
\end{figure}

\subsection{Distributional Assumption: Intrinsic Dimension}

While the previous conditions are structural assumptions regarding the function $f$ itself, faster convergence rates can also be achieved by making distribution assumptions about the support of $f$. A popular belief is that the $d$-dimensional data $X\in \mathbb{R}^d$ lie on or close to a low-dimensional manifold $\Mcal$ with intrinsic dimension $d_{\Mcal}$. 
This relates to the famous \emph{manifold hypothesis} that many high-dimensional data concentrate on low-dimensional manifolds \citep[e.g.,]{fefferman2016testing}. There is strong empirical support for this assumption, especially for non-tabular modalities such as text and images, see \cref{app:empevidID}.
Given that $d_{\Mcal}\ll d$, and again assuming $f$ to be $s$-smooth, this can lead to a much faster convergence rate of $n^{-\frac{s}{2 s + d_{\Mcal}}}$ \cite{bickel2007local}, as it is independent of the dimension $d$ of the ambient space.

Similarly to \cref{lem:smoothness_invariance}, the following lemma shows the invariance of the intrinsic dimension of a manifold with respect to any \ILT\ of the coordinates in the $d$-dimensional ambient space.

\begin{lemma}[Intrinsic Dimension Invariance under \ILT s]
\label{lem:intr_dim_invariance}
Let $\Mcal \subset \mathbb{R}^d$ be a smooth manifold of dimension \mbox{$d_{\Mcal} \le d$}. 
For any \ILT\ $Q$, the transformed set
\begin{equation*}
   Q(\Mcal) \;=\; \{\, Qx \,\mid\, x \in \Mcal\}\,.
\end{equation*}
is also a smooth manifold of dimension $d_{\Mcal}$.
\end{lemma}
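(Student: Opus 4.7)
The plan is to leverage the fact that any \ILT\ $Q\colon\mathbb{R}^d\to\mathbb{R}^d$ is a smooth diffeomorphism of the ambient space (both $Q$ and $Q^{-1}$ are linear, hence $C^\infty$), and that smooth submanifold structure is preserved under ambient diffeomorphisms. Because the statement concerns invariance of the \emph{intrinsic} dimension, the cleanest route is to transport any local parametrization of $\Mcal$ through $Q$ to obtain one of $Q(\Mcal)$ of the same dimension.

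First, I would recall the local parametrization characterization: $\Mcal$ is a smooth $d_\Mcal$-dimensional submanifold of $\mathbb{R}^d$ if and only if for every $x \in \Mcal$ there exist an open neighborhood $U \subseteq \mathbb{R}^d$ of $x$, an open set $V \subseteq \mathbb{R}^{d_\Mcal}$, and a smooth embedding $\psi\colon V \to \mathbb{R}^d$ with $\psi(V) = \Mcal \cap U$ and $D\psi(v)$ of full rank $d_\Mcal$ for all $v \in V$. Fixing any $y = Qx \in Q(\Mcal)$, I would exhibit such a parametrization at $y$ by setting $\tilde U = Q(U)$ and $\tilde\psi = Q \circ \psi$.

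The three properties required of $\tilde\psi$ all reduce to properties of $Q$. Openness of $\tilde U$ follows from $Q$ being a homeomorphism of $\mathbb{R}^d$. The image identity $\tilde\psi(V) = Q(\psi(V)) = Q(\Mcal \cap U) = Q(\Mcal) \cap \tilde U$ follows from $Q$ being a bijection on $\mathbb{R}^d$. By the chain rule, $D\tilde\psi(v) = Q\,D\psi(v)$, which retains rank $d_\Mcal$ because $Q$ is invertible and therefore preserves the rank of any matrix under left-multiplication. Finally, $\tilde\psi$ is a topological embedding onto its image, being the composition of the embedding $\psi$ with the ambient homeomorphism $Q$. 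Hence $\tilde\psi$ is a smooth local parametrization of $Q(\Mcal)$ of dimension $d_\Mcal$ around $y$, which by the characterization above establishes that $Q(\Mcal)$ is a smooth manifold of intrinsic dimension $d_\Mcal$.

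I do not anticipate a substantive obstacle here: the result is a standard instance of the principle that ambient $C^\infty$-diffeomorphisms preserve submanifold structure and dimension. The only items requiring any care are verifying that the full-rank condition on the Jacobian and the topological-embedding property survive composition with $Q$, and both follow immediately from $Q$ being a linear homeomorphism with nonzero determinant.
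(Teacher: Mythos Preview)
Your proposal is correct and follows essentially the same approach as the paper: both arguments transport local coordinate descriptions of $\Mcal$ through the ambient diffeomorphism $Q$ to obtain those of $Q(\Mcal)$. The only cosmetic difference is that you work with local parametrizations $\psi\colon V\subseteq\mathbb{R}^{d_\Mcal}\to\mathbb{R}^d$ and push forward via $Q\circ\psi$, whereas the paper works with charts $\varphi\colon U\subseteq\Mcal\to\mathbb{R}^{d_\Mcal}$ and pulls back via $\varphi\circ Q^{-1}$; your explicit verification of the Jacobian rank condition is a nice touch that the paper leaves implicit.
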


\begin{remark}
    Put differently, in case the latent representations $Z \in \mathbb{R}^d$ lie on a $d_{\Mcal}$-dimensional smooth manifold $\Mcal$, then the IL-transformed representations $Q(Z)$ also lie on a smooth manifold $Q(\Mcal)$ of dimension $d_{\Mcal}$.
\end{remark}

Summarizing previous results, the structural and distribution assumptions of smoothness and low intrinsic dimensionality are invariant with respect to any \ILT\  of the features. Hence, as opposed to additivity or sparsity, the two conditions hold not only for a particular instantiation of a latent representation $Z$ but for the entire equivalence class of latent representations induced by the class of \ILT s. This is crucial given the non-identifiability of latent representations, highlighting the importance of low intrinsic dimensions (IDs). 
%In \cref{app:empevidID}, we summarize literature discussing the empirical evidence on ID. %, we will discuss previous work next that not only provides empirical evidence for this assumption, but also shows that deep neural networks (DNNs) can adapt to it, and thus attain the minimax optimal convergence rate.

%The low intrinsic dimensionality condition seems to be the only reasonable assumption (besides requiring certain smoothness conditions) to achieve fast convergence rates when estimating any function, e.g., nuisance functions $m$ and $g$, defined on pre-trained representations. To justify these assumptions in our setup, we will discuss several empirical evidence validating it in the next section.

\paragraph{Deep Networks Can Adapt to Intrinsic Dimensions}
Recently, several theoretical works have shown that DNNs can adapt to the low intrinsic dimension of the data and thereby attain the optimal rate of $n^{-\frac{s}{2 s + d_{\Mcal}}}$ \citep{chen2019efficient, schmidt2019deep, nakada2020adaptive, KOHLER2023160}. In \cref{sec:downstream_inf}, we present a new convergence rate result that builds on the ideas of low ID and a hierarchical composition of functions particularly suited for DNNs.

%\subsection{Methods for Estimating Nuisance Functions of Latent Representations}

\section{Downstream Inference}\label{sec:downstream_inf}

The manifold assumption alone, however, cannot guarantee sufficient approximation rates in our setting. Even if the manifold dimension $d_{\Mcal}$ is much smaller than the ambient dimension $d$ (for example, $d_{\Mcal} \approx 30$), an unreasonably high degree of smoothness would need to be assumed to allow for convergence rates below $n^{-1/4}$.
In what follows, we give a more realistic assumption to achieve such rates.
In particular, we combine the low-dimensional manifold structure in the feature space with a structural smoothness and sparsity assumption on the target function.

\subsection{Structural Sparsity on the Manifold}

\citet{kohler2021rate} recently derived convergence rates based on the following assumption.

\begin{definition}[Hierarchical composition model, HCM]\label{def:hcm} \, \\[-16pt]
	\itshape
	% Let $D \in \N$, $m\colon \R^{D} \to \R$.
	\begin{enumerate}
		\item[(a)] We say that $f\colon \R^d \to \R$ satisfies a HCM of level 0, if $f(x) = x_j$ for some $j \in \{1, \ldots, d\}$.
		\item[(b)] We say that $f$ satisfies a HCM of level $k \ge 1$, if there is a $s$-smooth function $h\colon \R^p \to \R$ such that
			\[
				f(x) = h\big(h_1(x), \ldots, h_p(x)\big),
			\]
			where $h_1, \ldots, h_p : \R^{d} \to \R$ are HCMs of level $k - 1$.
	\end{enumerate}
	The collection $\Pcal$ of all pairs $(s, p) \in \R \times \N$ appearing in the specification is called the \emph{constraint set} of the HCM.
\end{definition}
An illustration of \cref{def:hcm} is given in \cref{app:hcm_vis}. The assumption includes the case of sparse linear and (generalized) additive models as a special case but is much more general. \citet{kohler2021rate} and \citet{schmidt2020nonparametric} exploit such a structure to show that neural networks can approximate the target function at a rate that is only determined by the worst-case pair $(s, p)$ appearing in the constraint set. It already follows from \cref{lem:noninvariance} that the constraint set of such a model is not invariant to \ILT s  of the input space. Furthermore, the assumption does not exploit the potentially low intrinsic dimensionality of the input space. To overcome these limitations, we propose a new assumption combining the input space's manifold structure with the hierarchical composition model.
\begin{assumption} \label{ass:A1}
    The target function $f_0$ can be decomposed as $f_0 = f \circ \psi$, where $\Mcal$ is a smooth, compact, $d_{\Mcal}$-dimensional manifold, $\psi\colon \Mcal \to \R^p$ is $s_\psi$-smooth, and $f$ is a HCM of level $k \in \N$ with constraint set $\Pcal$.
\end{assumption}
Whitney's embedding theorem \citep[e.g.,][Chapter 6]{lee2012smooth} allows any smooth manifold to be smoothly embedded into $\R^{2d_{\Mcal}}$. This corresponds to a mapping $\psi$ with $s_\psi = \infty$ and $p = 2 d_{\Mcal}$ in the assumption above. If not all information in the pre-trained representation $Z$ is relevant, however, $p$ can be much smaller.
Importantly, \cref{ass:A1} is not affected by \ILT s.
\begin{lemma}[Invariance of \cref{ass:A1} under \ILT s] \label{lem:hcm_invariance}
	Let $Q$ be any \ILT. If $f_0$ satisfies \cref{ass:A1} for a given $\Pcal$ and $(s_\psi, d_\Mcal)$, then $\tilde f_0 = f_0 \circ Q^{-1}$ satisfies \cref{ass:A1} with the same $\Pcal$ and $(s_\psi, d_\Mcal)$, 
\end{lemma}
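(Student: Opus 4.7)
The plan is to take the decomposition $f_0 = f \circ \psi$ guaranteed by \cref{ass:A1} and to push $Q^{-1}$ into $\psi$, leaving the outer HCM $f$ untouched. Concretely, define
\[
\tilde{\mathcal{M}} = Q(\mathcal{M}), \qquad \tilde{\psi} = \psi \circ Q^{-1}\big|_{\tilde{\mathcal{M}}},
\]
and observe that $\tilde{f}_0 = f_0 \circ Q^{-1} = f \circ \psi \circ Q^{-1} = f \circ \tilde{\psi}$ on $\tilde{\mathcal{M}}$. Then the goal reduces to verifying the three structural requirements of \cref{ass:A1} for the triple $(\tilde{\mathcal{M}}, \tilde\psi, f)$ with the same parameters $(s_\psi, d_\Mcal)$ and constraint set $\Pcal$.

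First, I would check that $\tilde{\mathcal{M}}$ is a smooth compact manifold of the same dimension $d_\Mcal$. The dimension and smooth-manifold properties are immediate from \cref{lem:intr_dim_invariance}, and compactness is preserved because an ILT is a homeomorphism of $\mathbb{R}^d$ and the continuous image of a compact set is compact. Next, I would argue that $\tilde\psi\colon \tilde{\mathcal{M}} \to \mathbb{R}^p$ is $s_\psi$-smooth: the map $Q^{-1}$ is linear and hence $C^\infty$ as a map on the ambient space, so its restriction to $\tilde{\mathcal{M}}$ is a smooth diffeomorphism onto $\Mcal$, and composition of a $C^\infty$ diffeomorphism with an $s_\psi$-smooth map yields an $s_\psi$-smooth map. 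This is essentially the same argument as in \cref{lem:smoothness_invariance}, applied componentwise to $\psi$. Finally, the outer function $f$ is literally unchanged, so it is a HCM of level $k$ with the same constraint set $\Pcal$ by construction of the decomposition.

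Putting these three points together gives exactly the statement of \cref{ass:A1} for $\tilde f_0$ with the same $\Pcal$, $s_\psi$, and $d_\Mcal$, which is what is claimed. I do not expect any real obstacle here: the essential work has already been done in \cref{lem:smoothness_invariance} and \cref{lem:intr_dim_invariance}, and the only mildly subtle point is to notice that the HCM structure sits entirely in $f$ and therefore is simply inherited without modification, rather than having to be re-derived on the transformed coordinates (which, by \cref{lem:noninvariance}, would generally fail for additive or sparse linear structure). The compactness step and the fact that $Q^{-1}$ restricts to a smooth diffeomorphism between $\tilde{\mathcal{M}}$ and $\mathcal{M}$ are the only items that require a brief justification beyond what is already in the previous lemmas.
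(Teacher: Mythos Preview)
Your proposal is correct and follows essentially the same approach as the paper: define $\tilde\psi = \psi \circ Q^{-1}$ on $Q(\Mcal)$, invoke \cref{lem:intr_dim_invariance} for the manifold dimension, continuity of $Q$ for compactness, \cref{lem:smoothness_invariance} for the $s_\psi$-smoothness of $\tilde\psi$, and observe that the outer HCM $f$ is unchanged so $\Pcal$ is preserved. Your write-up is slightly more explicit about the diffeomorphism and compactness justifications, but the argument is the same.
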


\subsection{Convergence Rate of DNNs}

We now show that DNNs can efficiently exploit this structure.
Let $(Y_i, Z_i)_{i = 1}^n$ be i.i.d.~observations and $\ell$ be a loss function. Define
\begin{align*}
	f_0    & = \argmin_{f\colon \R^{d} \to \R} \E[\ell(f(Z), Y)],                  \\
	\hat f & = \argmin_{f \in \mathcal{F}(L_n, \nu_n)} \frac 1 n \sum_{i = 1}^n \ell(f(Z_i), Y_i),
\end{align*}
where $ \mathcal{F}(L, \nu)$ is the set of feed-forward neural networks with $L$ layers and $\nu$ neurons per layer.
Let $Z \sim P_Z$ and define the $L_2(P_Z)$-norm of a function $f$ as $\| f \|_{L_2(P_Z)}^2 = \int f(z)^2 dP(z)$. We make the following assumption on the loss function $\ell$.
\begin{assumption} \label{ass:A2}
There is $a, b \in (0, \infty)$ such that
		\begin{align*}
			\frac{\E[\ell(f(Z), Y) ] - \E[\ell(f_0(Z), Y)]}{\| f- f_0 \|_{L_2(P_Z)}^2 } \in [a, b].
		\end{align*}
\end{assumption}
\cref{ass:A2} is satisfied for the squared and logistic loss, among others \citep[e.g.,][Lemma 8]{farrell2021deep}.

\begin{theorem} \label{thm:rate}
	Suppose \cref{ass:A1} and \cref{ass:A2} hold. There are sequences $L_n, \nu_n$ and a corresponding sequence of neural network architectures $\Fcal(L_n, \nu_n)$ such that (up to $\log n$ factors)
	\begin{align*}
		\| \hat f- f_0 \|_{L_2(P_Z)}  = O_p\left(\max_{(s, p) \in \Pcal \cup (s_\psi, d_{\Mcal})} n^{-\frac{s}{2s + p}}\right).
	\end{align*}
\end{theorem}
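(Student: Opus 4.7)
The plan is a bias-variance decomposition in which the bias is controlled by an approximation result tailored to \cref{ass:A1} and the variance by a complexity bound for the neural network class. First I would use \cref{ass:A2} to identify excess risk with $\| \hat f - f_0 \|_{L_2(P_Z)}^2$ up to constants, then apply a standard empirical-process oracle inequality in the spirit of Schmidt--Hieber or Farrell--Liang--Misiakiewicz to obtain
\begin{equation*}
  \| \hat f - f_0 \|_{L_2(P_Z)}^2 \lesssim \inf_{g \in \Fcal(L_n,\nu_n)} \| g - f_0 \|_\infty^2 + \frac{L_n \nu_n^2 \log^2 n}{n},
\end{equation*}
where the stochastic term comes from the Bartlett-type pseudo-dimension bound for ReLU networks applied with a standard chaining or peeling argument.

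For the approximation term, I would construct a composite network $\tilde g = \tilde f \circ \tilde \psi$ mirroring the decomposition $f_0 = f \circ \psi$. The inner block $\tilde \psi$ approximates $\psi\colon \Mcal \to \R^p$ by the manifold-adapted scheme of Nakada--Imaizumi (or Chen et al.): cover $\Mcal$ with finitely many local charts, fit local polynomial approximations of order $\lfloor s_\psi \rfloor$, and combine them through a partition of unity implemented by the network, yielding uniform error $O(N^{-s_\psi/d_\Mcal})$ using $O(N \log N)$ parameters. The outer block $\tilde f$ approximates the HCM $f$ by the recursive construction of \citet{kohler2021rate}/\citet{schmidt2020nonparametric}, which builds one subnetwork per node of the composition tree, achieving uniform error $O\!\bigl(\max_{(s,p) \in \Pcal} N^{-s/p}\bigr)$ on any fixed compact set. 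Using that $f$ is locally Lipschitz on the range of $\psi$ (which is compact by compactness of $\Mcal$ and smoothness of $\psi$), the triangle inequality
$\| \tilde f \circ \tilde \psi - f \circ \psi \|_\infty \le \mathrm{Lip}(f)\,\| \tilde \psi - \psi \|_\infty + \| \tilde f - f \|_\infty$
yields $\| \tilde g - f_0 \|_\infty = O\!\bigl(\max_{(s,p) \in \Pcal \cup \{(s_\psi, d_\Mcal)\}} N^{-s/p}\bigr)$. Balancing this against $L_n \nu_n^2 \log^2 n / n$ by choosing $N$, and hence $L_n, \nu_n$, so that the approximation and stochastic terms match for the worst-case pair produces the claimed rate.

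The main obstacle is the clean interface between the two blocks: the HCM approximation $\tilde f$ must be accurate on a slightly inflated compact set containing the range of $\tilde \psi$, not just $\psi(\Mcal)$, and the internal Lipschitz constants arising in the $k$-level recursion must be shown not to amplify the inner error beyond a constant factor. I would handle this by selecting $N_\psi$ large enough that $\mathrm{range}(\tilde\psi)$ lies in a fixed $\epsilon_0$-neighbourhood of $\psi(\Mcal)$, then applying the Kohler--Langer construction on this enlarged domain with the level-by-level Lipschitz bounds from their Lemma on composition. A secondary, mostly bookkeeping, obstacle is packaging the composite $\tilde f \circ \tilde \psi$ into a single class $\Fcal(L_n,\nu_n)$; this follows because feed-forward networks compose by adding depths and taking the maximum width, with only constant padding required to keep the architecture uniform.
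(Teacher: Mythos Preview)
Your proposal is correct and follows essentially the same strategy as the paper: a bias--variance decomposition via \cref{ass:A2}, a neural-network complexity bound of Bartlett type for the stochastic term, and an approximation result obtained by composing a manifold-adapted approximant of $\psi$ with the Kohler--Langer HCM approximant of $f$, glued together by the Lipschitz argument you describe (including the domain-enlargement issue, which the paper handles exactly as you anticipate). The only cosmetic differences are that the paper packages the empirical-process step through the localization theorem of van der Vaart--Wellner rather than a direct oracle inequality, and cites \citet{KOHLER2023160} (Minkowski-dimension version) instead of Nakada--Imaizumi/Chen et al.\ for the manifold block; both choices lead to the same rate.
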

The result shows that the convergence rate of the neural networks is only determined by the worst-case pair $(s, p)$ appearing in the constraint set of the HCM and the embedding map $\psi$. The theorem extends the results of \citet{kohler2021rate} in two ways. First, it allows for more general loss functions than the square loss. This is important since classification methods are often used to adjust for confounding effects. Second, it explicitly exploits the manifold structure of the input space, which may lead to much sparser HCM specifications and dramatically improved rates. 

\subsection{Validity of DML Inference}

In the previous sections, we explored plausible conditions under which the ATE is identifiable, and DNNs can estimate the nuisance functions with fast rates. We now combine our findings to give a general result for the validity of DML from pre-trained representations. 

For binary treatment $T \in \{0, 1\}$ and pre-trained representations $Z$, we define the outcome regression function
\begin{equation*}
    g(t, z) \coloneqq \E[Y |T = t, Z = z],
\end{equation*}
and the propensity score
\begin{equation*}
    m(z) \coloneqq \mathbb{P}[T = 1 | Z = z].
\end{equation*}
Suppose we are given an i.i.d.~sample $(Y_i, Z_i, T_i)_{i = 1}^n$.
DML estimators of the ATE are typically based on a cross-fitting procedure. Specifically, let $\bigcup_{k = 1}^K I_{k} = \{1, \dots, n\}$ be a partition of the sample indices such that $|I_k| /n \to 1/K$. Let $\hat g^{(k)}$ and $\hat m^{(k)}$ denote estimators of $g$ and $m$ computed only from the samples $(Y_i, Z_i, T_i)_{i \notin I_k}$.
Defining 
\begin{align*}
    \widehat{\ATE}^{(k)} = \frac{1}{|I_{k}|} \sum_{i \in I_k} \rho(T_i, Y_i, Z_i; \hat g^{(k)}, \hat m^{(k)}),
\end{align*}
with orthogonalized score
\begin{align*}
    \rho(T_i, &Y_i, Z_i; g, m) =  g(1, Z_i) -  g(0, Z_i)  \\
    \qquad  + &  \frac{T_i(Y_i - g(1, Z_i))}{m(Z_i)} + \frac{(1 - T_i)(Y_i -g(0, Z_i))}{1 - m(Z_i)},
\end{align*}
the final DML estimate of ATE is given by 
\begin{align*}
    \widehat{\ATE} = \frac{1}{K} \sum_{k = 1}^K \widehat{\ATE}^{(k)}.
\end{align*}
We need the following additional conditions.
\begin{assumption} \label{ass:A3}
       It holds
    \begin{align*}
        &\max_{t \in \{0, 1\}}\E[|g(t, Z)|^5] < \infty, \quad \E[|Y|^5] < \infty, \\
        &\E[|Y - g(T, Z)|^2] > 0, \quad\Pr(m(Z) \in (\eps, 1 - \eps)) = 1,
    \end{align*}
    for some $\eps > 0$.
\end{assumption}
The first two conditions ensure that the tails of $Y$ and $g(t, Z)$ are not too heavy. The second two conditions are required for the ATE to be identifiable.

\begin{theorem} \label{thm:dml}
    Suppose the pre-trained representation is $P$-valid, \cref{ass:A3} holds, and the outcome regression and propensity score 
    functions $g$ and $m$ satisfy \cref{ass:A1} with constraints $\Pcal_g \cup (s_\psi, d_{\Mcal})$ and $\Pcal_m \cup (s_\psi', d_\Mcal)$, respectively. 
    Suppose further 
    \begin{align} \label{eq:dml-cond}
        \min_{(s, p) \in \Pcal_g \cup (s_\psi, d_{\Mcal})} \frac{s}{p} \times \min_{(s', p') \in \Pcal_m \cup (s_\psi', d_{\Mcal})} \frac{s'}{p'} > \frac 1 4,
    \end{align}
    and the estimators $\hat g^{(k)}$ and $\hat m^{(k)}$ are DNNs as specified in \cref{thm:rate} with the restriction that $\hat m^{(k)}$ is clipped away from 0 and 1. Then 
    \begin{equation*}
        \sqrt{n} (\widehat{\ATE} - \ATE) \to \mathcal{N}(0, \sigma^2),
    \end{equation*}
    where  $ \sigma^2 = \E[\rho(T_i, Y_i, Z_i; g, m)^2]$.
\end{theorem}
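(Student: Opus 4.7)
The plan is to reduce the claim to the generic DML asymptotic normality theorem of \citet{chernozhukov2018double} applied to the AIPW score $\rho$ with nuisances $(g, m)$ now defined on the pre-trained representation $Z$ rather than $W$. Three ingredients need to be verified: (i) the score evaluated at $(g, m)$ identifies the ATE and has mean zero, (ii) $\rho$ is Neyman orthogonal with finite asymptotic variance bounded away from zero, and (iii) the product $L^2(P_Z)$ error of the cross-fitted nuisance estimators is $o_p(n^{-1/2})$. Items (i) and (ii) are bookkeeping, while (iii) is where the structural content of \cref{ass:A1} and of \cref{thm:rate} does the actual work and is therefore the substantive step.

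For (i), $P$-validity of $Z$ gives $\E[g(1,Z) - g(0,Z)] = \E[\E[Y\mid T=1, W] - \E[Y\mid T=0, W]] = \ATE$, and positivity $m(Z) \in (\eps, 1 - \eps)$ from \cref{ass:A3} makes the inverse-propensity augmentation well-defined with mean zero at the truth. For (ii), a standard AIPW calculation (with $T$ binary and $m$ bounded away from $0$ and $1$) shows that the Gateaux derivatives of $\E[\rho(T, Y, Z; g, m)]$ in both $g$ and $m$ vanish at the truth; the finite fifth moments of $Y$ and $g(t, Z)$ in \cref{ass:A3} then supply the $2+\delta$ uniform-integrability requirement of the Lindeberg CLT, while $\E[(Y - g(T, Z))^2] > 0$ together with positivity gives $\sigma^2 > 0$. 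Clipping $\hat m^{(k)}$ into $[\eps, 1 - \eps]$ projects onto a convex set containing the truth, so its $L_2(P_Z)$ error cannot increase, and it guarantees that $1/\hat m^{(k)}$ and $1/(1 - \hat m^{(k)})$ are uniformly bounded, which is what is needed to control the higher-order terms in the von Mises expansion of $\widehat{\ATE}$.

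The crux of the argument, and the step I expect to be the main obstacle, is (iii). I would apply \cref{thm:rate} separately to $\hat g^{(k)}$ and $\hat m^{(k)}$; by \cref{lem:hcm_invariance} the HCM structure in \cref{ass:A1} is invariant to the choice of representative of $Z$, so the rates are well-posed. This yields $\| \hat g^{(k)} - g \|_{L_2(P_Z)} = O_p(n^{-\alpha_g})$ with $\alpha_g = \min_{(s,p) \in \Pcal_g \cup (s_\psi, d_\Mcal)} s/(2s + p)$, and analogously $\| \hat m^{(k)} - m \|_{L_2(P_Z)} = O_p(n^{-\alpha_m})$ (the clipping preserves the rate). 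Since $x \mapsto x/(2x + 1)$ is strictly increasing, the minimizer of $s/(2s + p)$ coincides with the minimizer of $s/p$. Writing $a = \min_{\Pcal_g \cup (s_\psi, d_\Mcal)} s/p$ and $b = \min_{\Pcal_m \cup (s_\psi', d_\Mcal)} s'/p'$, a direct algebraic check gives
\begin{equation*}
    \frac{a}{2a + 1} + \frac{b}{2b + 1} > \frac{1}{2} \quad \Longleftrightarrow \quad ab > \frac{1}{4},
\end{equation*}
which is precisely hypothesis~\eqref{eq:dml-cond}. Hence $\| \hat g^{(k)} - g \|_{L_2(P_Z)} \cdot \| \hat m^{(k)} - m \|_{L_2(P_Z)} = o_p(n^{-1/2})$, and the standard cross-fitting decomposition of $\sqrt{n}(\widehat{\ATE} - \ATE)$ into a linear term (which converges to $\mathcal{N}(0, \sigma^2)$ by the Lindeberg CLT) and a quadratic-in-nuisance remainder controlled by this product rate completes the proof.
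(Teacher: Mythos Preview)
Your proposal is correct and follows essentially the same route as the paper: reduce to the general DML normality theorem of \citet{chernozhukov2017double,chernozhukov2018double}, invoke \cref{thm:rate} for each nuisance, and then check via the elementary equivalence $\tfrac{a}{2a+1} + \tfrac{b}{2b+1} > \tfrac{1}{2} \Leftrightarrow ab > \tfrac{1}{4}$ that condition~\eqref{eq:dml-cond} gives the required product rate $o_p(n^{-1/2})$. The paper's write-up is terser on the identification and orthogonality checks (it simply defers the moment and boundedness verifications to \cref{ass:A3}), but the substantive rate argument is identical to yours.
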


Condition \eqref{eq:dml-cond} is our primary regularity condition, ensuring sufficiently fast convergence for valid DML inference. It characterizes the necessary trade-off between smoothness and dimensionality of the components in the HCM. In particular, it is satisfied when each component function in the model has input dimension less than twice its smoothness. 

\section{Experiments}\label{sec:experiments}

%\subsection{ATE Estimation on Semi-Synthetic Data}
In the following, we will complement our theoretical results from the previous section with empirical evidence from several experiments. The experiments include both images and text as non-tabular data, which act as the source of confounding in the ATE setting. Further experiments can be found in \cref{app:further_exp}.

\subsection{Validity of ATE Inference from Pre-Trained Representations}\label{sec:exp_val_pretrained_rep}
% \vspace*{-1mm}
% \begin{figure}[!t]
%     \centering
%     \includegraphics[width=1\columnwidth]{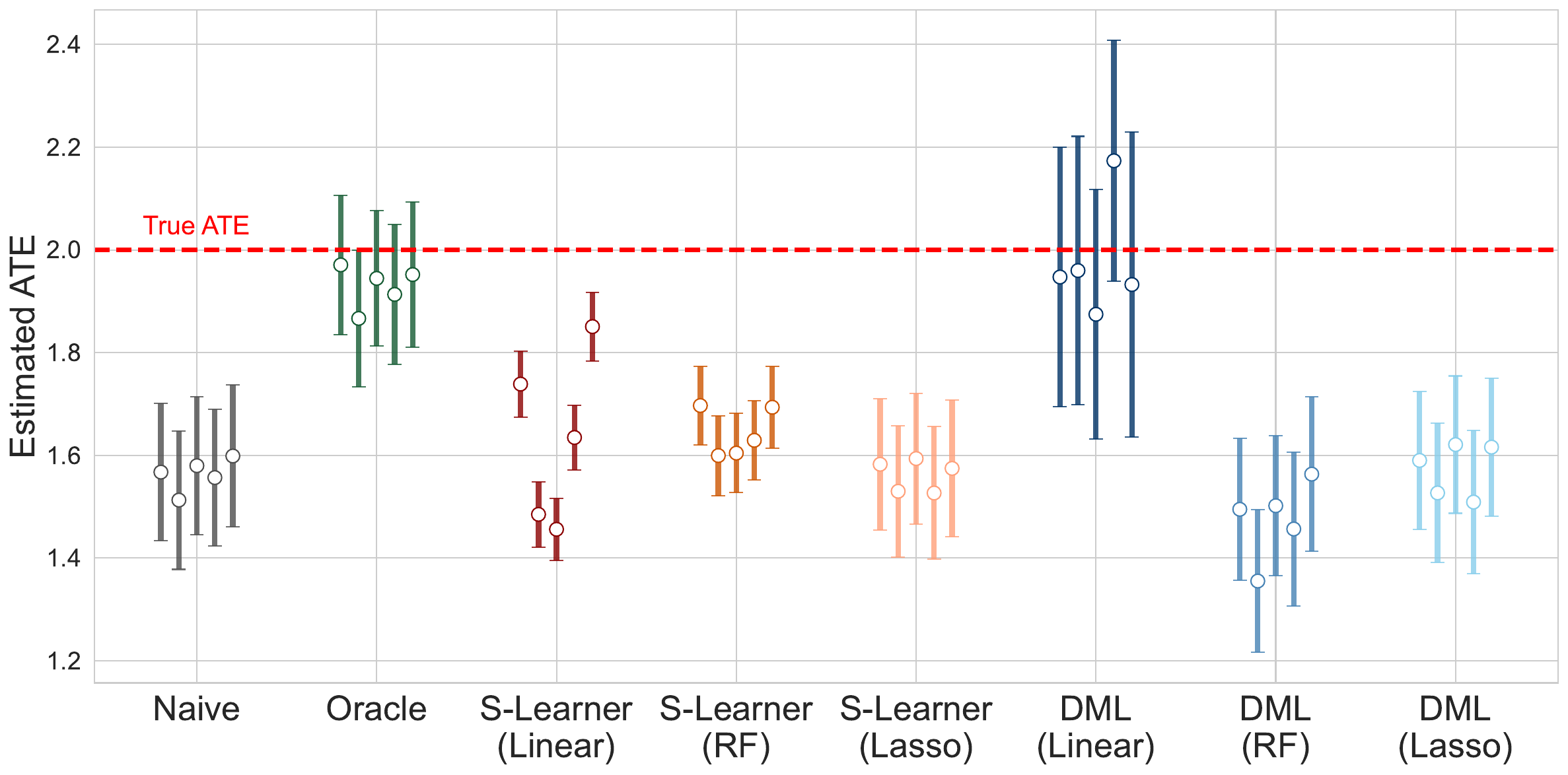}
%     \vspace*{-5mm}
%     \caption{\textit{Label Confounding: Comparison of ATE estimators on the IMDb dataset. DML and S-Learner use pre-trained representations. Point estimates and 95\% CIs are depicted.
%     }}\label{fig:imdb}
% \end{figure}
% %

\paragraph{Text Data} We utilize the IMDb Movie Reviews dataset from \citet{lhoest-etal-2021-datasets} consisting of 50,000 movie reviews labeled for sentiment analysis. The latent features $Z$ as representations of the movie reviews are computed using the last hidden layer of the pre-trained Transformer-based model BERT \citep{devlin-etal-2019-bert}. More specifically, each review results in a 768-dimensional latent variable $Z$ by extracting the [CLS] token that summarizes the entire sequence. For this, each review is tokenized using BERT’s subword tokenizer (bert-base-uncased), truncated to a maximum length of 128 tokens, and padded where necessary. 

\paragraph{Image Data} We further use the dataset from \citet{kermany2018labeled} that contains 5,863 chest X-ray images of children. %that were obtain from routine clinical care in the Guangzhou Women and Children’s Medical Center, Guangzhou. 
Each image is labeled according to whether the lung disease pneumonia is present or not. The latent features are obtained by passing the images through a pre-trained convolutional neural network and extracting the 1024-dimensional last hidden layer features of the model. We use the pre-trained Densenet-121 model from the TorchXRayVision library \citep{cohen2022torchxrayvision}, which was trained on several publicly available chest X-ray datasets \citep{cohen2020limits}. Further details on the datasets and pre-trained models used in our experiments are provided in \cref{app:data_models}.

%\vspace*{-1mm}
\begin{figure}[!t]
    \centering
    \includegraphics[width=1\columnwidth]{}
    \vspace*{-5mm}
    \caption{\textit{Label Confounding: Comparison of ATE estimators on the IMDb dataset. DML and S-Learner use pre-trained representations. Point estimates and 95\% CIs are depicted.
    }}\label{fig:imdb}
    \vspace*{-2mm}
\end{figure}

\paragraph{Confounding Setup} For both data applications, we simulate treatment and outcome variables while inducing confounding based on the labels. As an example, for the modified image dataset, children with pneumonia have a higher chance of receiving treatment compared to healthy children. In contrast, pneumonia negatively impacts the outcome variable. The same confounding is present in our modified text dataset. Hence, the label creates a negative bias in both ATE settings if not properly accounted for. Further details about the confounding setups are provided in \cref{app:confounding}.

\paragraph{ATE Estimators} We compare the performance of DML using three types of nuisance estimators: linear models with and without $L_1$-penalization (Lasso/ Linear), as well as random forest (RF). For comparison, we also include another common causal estimator, called S-Leaner, which only estimates the outcome function \eqref{eq:outcome_reg} (details in \cref{app:ate_estimators}). In each of the simulations, estimators facilitate the information contained in the non-tabular data to adjust for confounding by using the latent features from the pre-trained models in the estimation. As a benchmark, we compare the estimate to the ones of a \textit{Naive} estimator (unadjusted estimation) and the \textit{Oracle} estimator (adjusts for the true label). 

\paragraph{Label Confounding Results} The results for the IMDb experiment over 5 simulations are depicted in \cref{fig:imdb}. As expected, the naive estimator shows a strong negative bias. The same can be observed for the S-Learner (for all nuisance estimators) and for DML using lasso or random forest. In contrast, DML using linear nuisance estimators (without sparsity-inducing penalty) yields unbiased estimates with good coverage, as can be seen by the \mbox{confidence intervals (CIs)}. First, these results indicate that DML seems to benefit from the doubly robust estimation. Second, DML fails when using ILT non-invariant nuisance estimators such as lasso or random forest. This is because neither of the two can achieve sufficiently fast convergence rates without structural assumptions, such as sparsity or additivity. The latter being unlikely to hold given that representations were shown to be identifiable only up to ILTs. The results for image-based experiment are given in \cref{app:ate_comp}, where the same phenomenon can be observed.
\vspace*{-1mm}
\begin{figure}[!t]
    \centering
    \includegraphics[width=1\columnwidth]{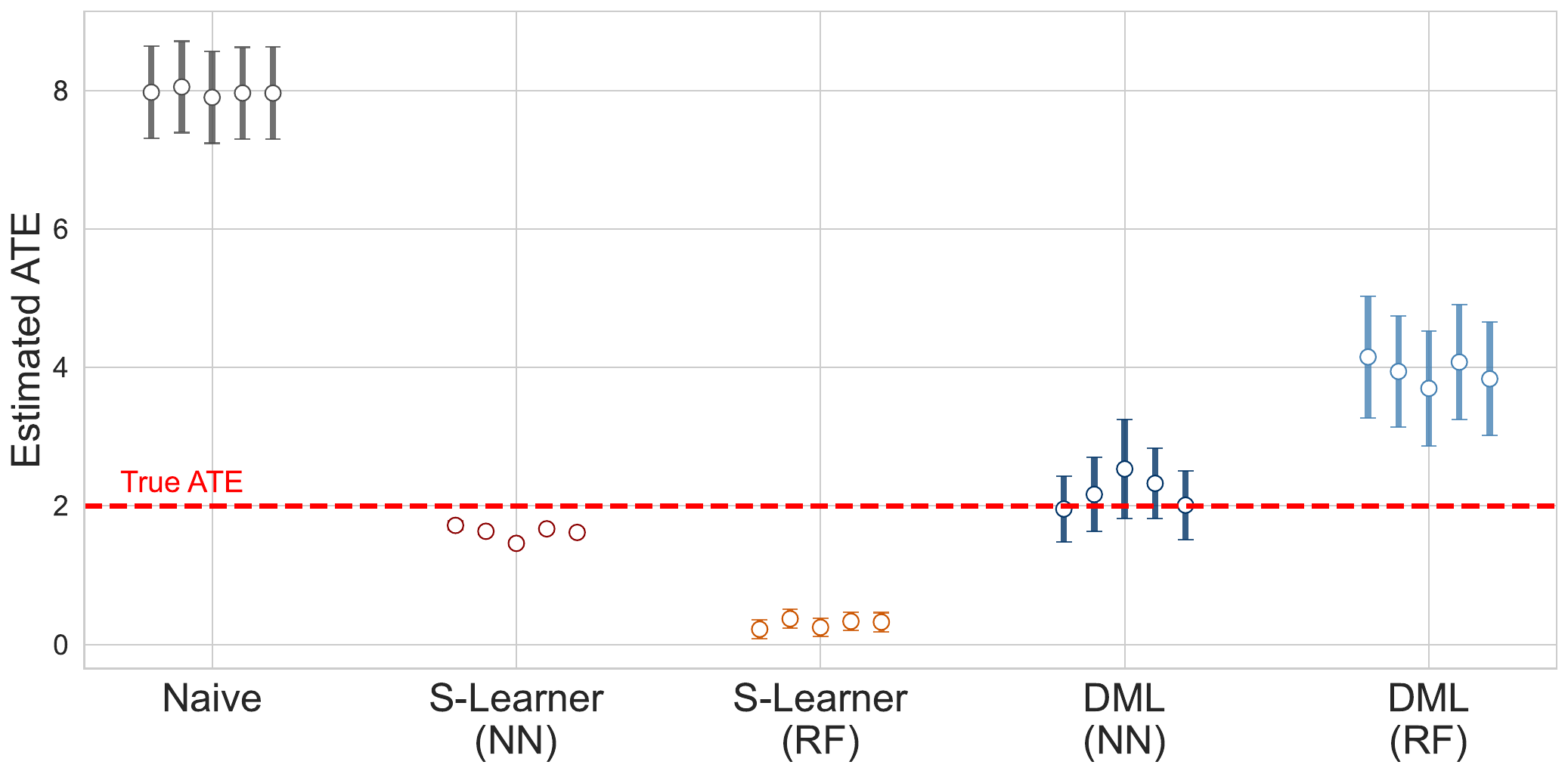}
    \vspace*{-5mm}
    \caption{\textit{Complex Confounding: Comparison of ATE estimators on the X-ray dataset. DML and S-Learner use pre-trained representations. Point estimates and 95\% CIs are depicted.
    }}\label{fig:pneu}
    \vspace{-3mm}
\end{figure}

\subsection{Neural Networks Adapt to Functions on Low Dimensional Manifolds}\label{sec:ldim_manifold}

In a second line of experiments, we investigate the ability of neural network-based nuisance estimation to adapt to low intrinsic dimensions. The features in our data sets already concentrate on a low-dimensional manifold. For example, \cref{fig:intrin} shows that the intrinsic dimension of the X-ray images is around $d_{\Mcal} = 12$, whereas the ambient dimension is $d = 1024$.
To simulate complex confounding with structural smoothness and sparsity, we first train an autoencoder (AE) with 5-dimensional latent space on the pre-trained representations. These low-dimensional encodings from the AE are then used to simulate confounding. Due to this construction of confounding, the true nuisance functions correspond to encoder-then-linear functions, which are multi-layered hierarchical compositions and therefore align with \cref{ass:A1}. We refer to this as \textit{complex} confounding.

\paragraph{Complex Confounding Results} We again compare DML and the S-Learner with different nuisance estimators. In contrast to the previous section, we now use a neural network (with ReLU activation, 100 hidden layers with 50 neurons each) instead of a linear model in the outcome regression nuisance estimation. The results are depicted in \cref{fig:pneu}. Similar to the previous experiments, we find that the naive estimate is strongly biased similar to the random forest-based estimators. In contrast, the neural network-based estimators exhibit much less bias. While the S-Learner's confidence intervals are too optimistic, the DML estimator shows high coverage and is therefore the only estimator that enables valid inference. The results for the IMDb dataset with complex confounding are given in \cref{app:ate_comp}.

\paragraph{Low Intrinsic Dimension} We also investigate the low intrinsic dimension hypothesis about pre-trained representations. Using different intrinsic dimension (ID) estimators such as the Maximum Likelihood (MLE) \cite{levina2004maximum}, the Expected Simplex Skewness (ESS), and the local Principal Component Analysis (lPCA) we estimate the ID of different pre-trained representations of the X-ray dataset obtained from different pre-trained models from the TorchXRayVision library \citep{cohen2022torchxrayvision}. The results in \cref{fig:intrin} indicate that the intrinsic dimension of the pre-trained representations is much smaller than the dimension of the ambient space (1024). A finding that is in line with previous research, which is further discussed in \cref{app:empevidID}. Additional information on the experiment and the estimators used can be found in \cref{app:id_estimates}.

\vspace*{1mm}
\begin{figure}[!t]
    \centering
    \includegraphics[width=1\columnwidth]{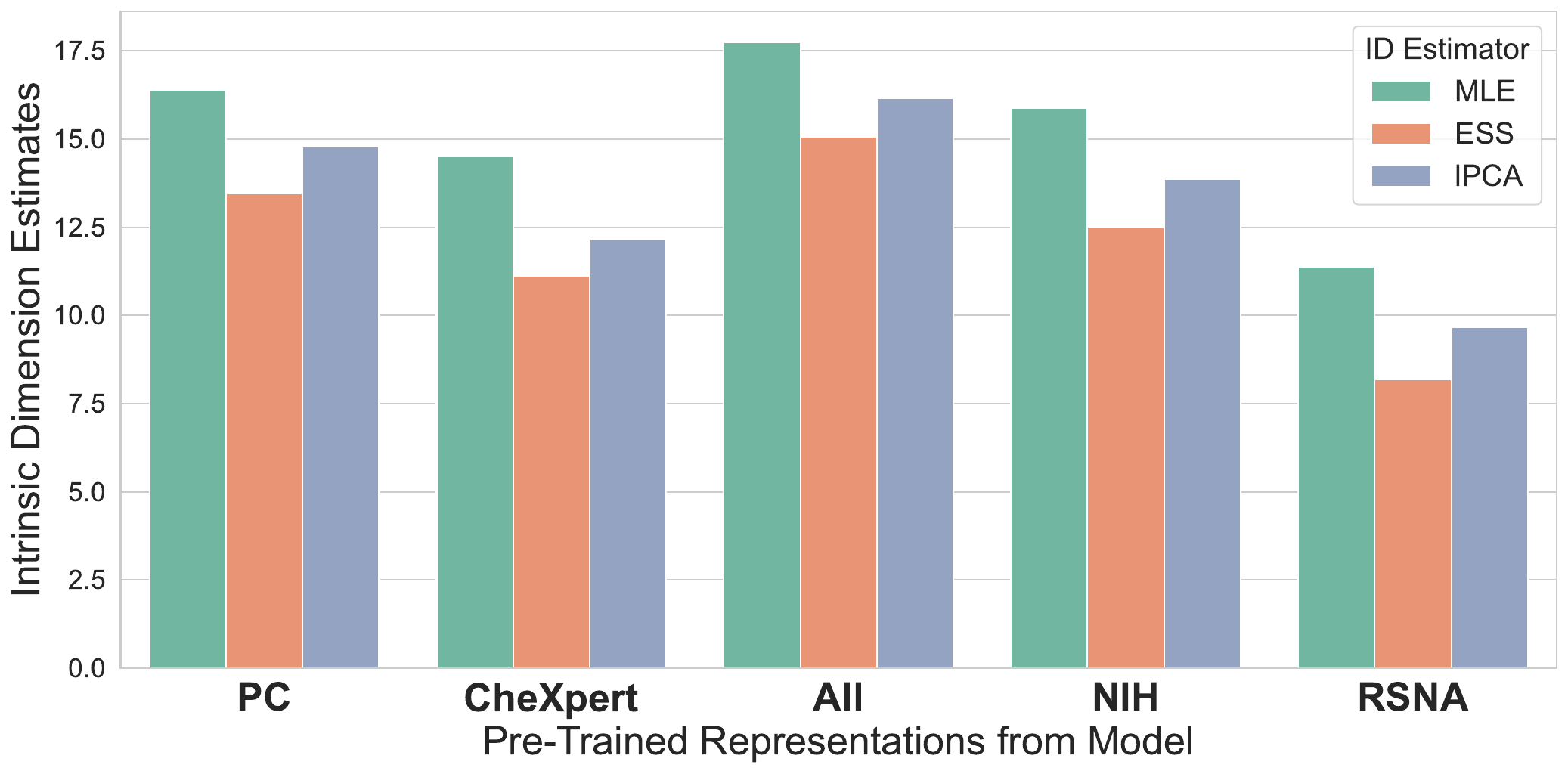}
    \vspace*{-5mm}
    \caption{\textit{Different Intrinsic Dimension (ID) estimates of pre-trained representations obtained from different pre-trained models. Representations are based on the X-Ray dataset.
    }}\label{fig:intrin}
    \vspace{-2.5mm}
\end{figure}

% \subsection{Low Intrinsic Dimensionality of Representations}
%\vspace{-0.3cm}

\subsection{The Power of Pre-Training for Estimation}\label{sec:exp_pretraining}

In another line of experiments, we explore the benefits of pre-training in our setup. In particular, we are investigating whether pre-trained neural feature extractors actually outperform non-pre-trained feature extractors in the nuisance estimation of DML-based ATE estimation. We conduct the experiments in the context of the previously introduced image-based \textit{Label Confounding} setup. To adjust for confounding in this setup, nuisance estimators must extract the relevant information from the X-rays. For this purpose, we compare DML using pre-trained feature extractors against DML using neural feature extractors that are trained on downstream data from scratch. While the former uses the same pre-trained Densenet-121 model that was used in previous image-based experiments, the latter incorporates Convolutional Neural Networks (CNNs) as nuisance estimators into the DML ATE estimation routine. The following experiment is based on 500 sampled images from the X-Ray dataset, where five-layer CNNs are used in the non-pre-trained DML version. Further details about the training and architecture of the utilized CNNs can be found in \cref{app:info_dml_cnn}.

The results are depicted in \cref{fig:dml_cnn}. For illustrative purposes, we also show the estimates of the \textit{Naive} and \textit{Oracle} estimators, which match those of previous experiments. The key finding of \cref{fig:dml_cnn} is that DML using pre-trained feature extractors \textit{(DML (Pre-trained))} yields unbiased ATE estimates and well-calibrated confidence intervals, while DML without pre-training \textit{(DML (CNN))} %that needs to train neural feature extractors from scratch 
does not. The same phenomenon can be observed in experiments with varying sample sizes and CNN architecture. These experiments are discussed in \cref{app:exp_dml_cnn}. Overall, the results emphasize the benefits of using DML in combination with pre-trained models when utilizing non-tabular data such as images, for confounding adjustment in ATE estimation.

\begin{figure}[!t]
    \centering
    \includegraphics[width=1\columnwidth]{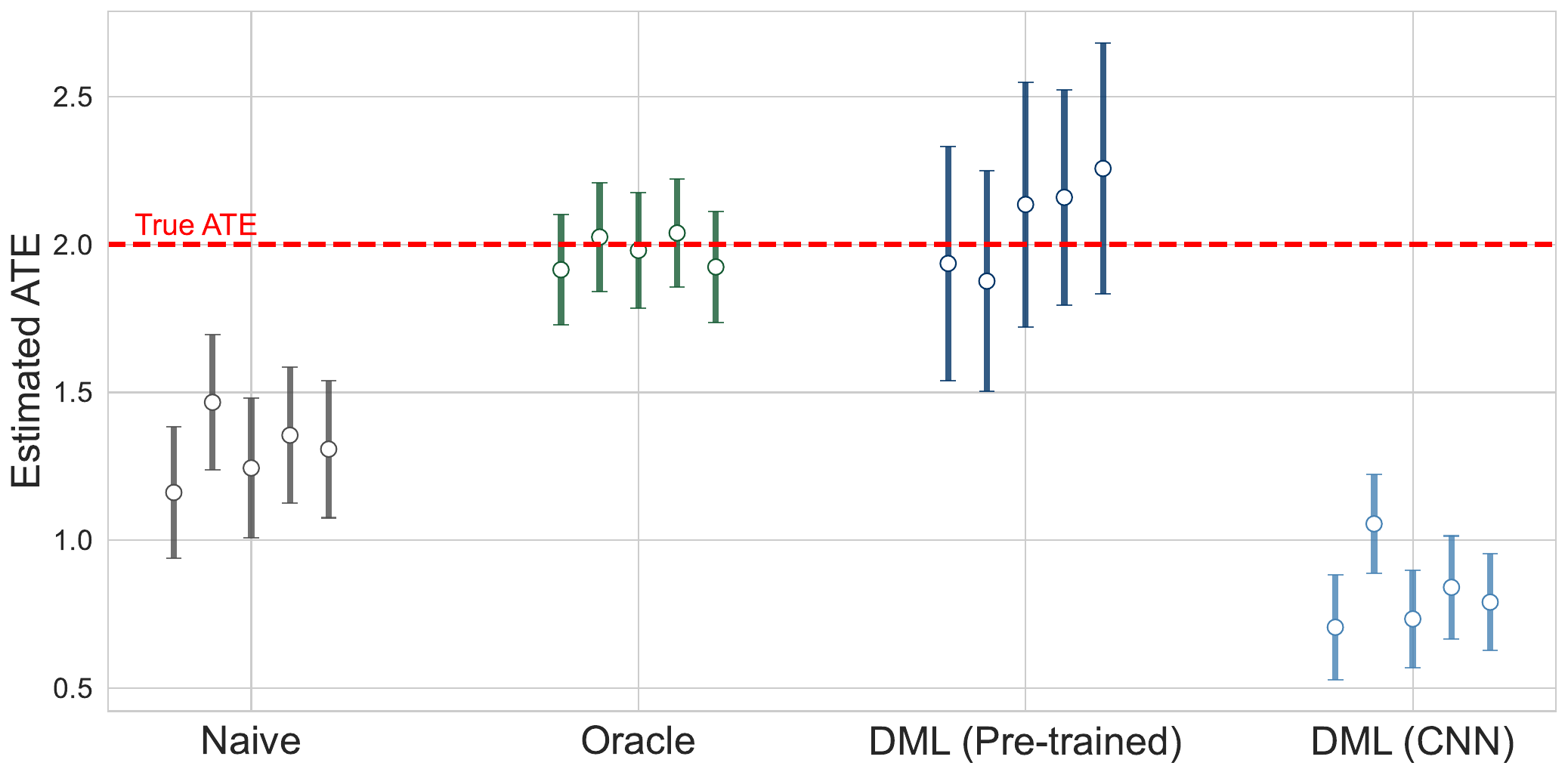}
    \vspace*{-5mm}
    \caption{\textit{%DML with/out Pre-Training: 
    Comparison of DML using pre-trained representations ``DML (Pre-trained)" and DML without pre-training ``DML (CNN)" for ATE estimation. 
    %Confounding is induced by Label confounding and 
    Experiment is based on the X-Ray dataset. Point estimates and 95\% CIs are depicted.
    }}\label{fig:dml_cnn}
    \vspace*{-2mm}
\end{figure}

\paragraph{Further Experiments} Further experiments on the asymptotic normality of DML-based ATE estimation as well as the role of the HCM structure of the nuisance functions are given and discussed in \cref{app:asym_norm} and \ref{app:ate_hcm}.

\section{Discussion}

In this work, we explore ATE estimation under confounding induced by non-tabular data. We investigate conditions under which pre-trained neural representations can effectively be used to adjust for such kind of confounding. While the representations typically have lower dimensionality, their invariance under orthogonal transformations challenges common assumptions to obtain fast nuisance function convergence rates, like sparsity and additivity. Instead, the study leverages the concept of low intrinsic dimensionality, combining it with invariance properties and structural sparsity to establish conditions for fast convergence rates in nuisance estimation. This ensures valid ATE estimation and inference, contributing both theoretical insights and practical guidance for integrating machine learning into causal inference.

\paragraph{Limitations and Future Research}
In this work, we focus on a single source of confounding from a non-tabular data modality. A potential future research direction is to study the influence of multiple modalities on ATE estimation. In particular, having multiple modalities requires further causal and structural assumptions on the interplay of the modalities. For example, this could mean that each modality is best processed by a separate network or that the confounding information can only be extracted through a joint network that correctly fuses modalities at some point. We note, however, that this is more of a technical aspect and a matter of domain knowledge, and thus being of minor relevance for the discussion and theoretical contributions of our study.%results are agnostic to this aspect given that once P-validity for $Z$ can be ensured. %this data fusion aspect as it is mainly a technical implementation issue, and thus beyond the scope of the current study. We therefore leave it to future research.

Moreover, we focused on the estimation of the ATE in this paper, given its popularity in both theory and practice. However, our approach could also be extended to cover other target parameters such as the average treatment effect on the treated (ATT) or the conditional ATE (CATE). While each of these would require a dedicated discussion of the necessary assumptions, we believe that many of the core ideas and results presented here---such as the convergence rates for neural network-based estimation---could also be transferred and used in a theoretical investigation in those settings.

%The latter being particularly unfortunate in scenarios where one would like to fuse the latent feature representation with other tabular potential confounding variables for regression trees and the lasso are commonly used in practice 

% \section{Conclusion}

% Acknowledgements should only appear in the accepted version.
% \section*{Acknowledgements}
% 
% \textbf{Do not} include acknowledgements in the initial version of
% the paper submitted for blind review.
% 
% If a paper is accepted, the final camera-ready version can (and
% usually should) include acknowledgements.  Such acknowledgements
% should be placed at the end of the section, in an unnumbered section
% that does not count towards the paper page limit. Typically, this will 
% include thanks to reviewers who gave useful comments, to colleagues 
% who contributed to the ideas, and to funding agencies and corporate 
% sponsors that provided financial support.

\section*{Impact Statement}

%Authors are \textbf{required} to include a statement of the potential 
%broader impact of their work, including its ethical aspects and future 
%societal consequences. This statement should be in an unnumbered 
% section at the end of the paper (co-located with Acknowledgements -- 
% the two may appear in either order, but both must be before References), 
% and does not count toward the paper page limit. In many cases, where 
% the ethical impacts and expected societal implications are those that 
% are well established when advancing the field of Machine Learning, 
% substantial discussion is not required, and a simple statement such 
% as the following will suffice:
This paper presents work whose goal is to advance the field of 
Machine Learning. There are many potential societal consequences 
of our work, none which we feel must be specifically highlighted here.

%The above statement can be used verbatim in such cases, but we 
%encourage authors to think about whether there is content which does 
%warrant further discussion, as this statement will be apparent if the 
%paper is later flagged for ethics review.

\bibliography{bibliography}
\bibliographystyle{icml2025}

%%%%%%%%%%%%%%%%%%%%%%%%%%%%%%%%%%%%%%%%%%%%%%%%%%%%%%%%%%%%%%%%%%%%%%%%%%%%%%%
%%%%%%%%%%%%%%%%%%%%%%%%%%%%%%%%%%%%%%%%%%%%%%%%%%%%%%%%%%%%%%%%%%%%%%%%%%%%%%%
% APPENDIX
%%%%%%%%%%%%%%%%%%%%%%%%%%%%%%%%%%%%%%%%%%%%%%%%%%%%%%%%%%%%%%%%%%%%%%%%%%%%%%%
%%%%%%%%%%%%%%%%%%%%%%%%%%%%%%%%%%%%%%%%%%%%%%%%%%%%%%%%%%%%%%%%%%%%%%%%%%%%%%%
\newpage
\appendix
\onecolumn

\section{Proofs and Additional Results} \label{app:furthermath}

\subsection{Equivalence Class of Representations}\label{sec:aux_res}

\begin{lemma}[Equivalence Class of Representations]
\label{lem:identifiability_RPS}
Let \((\Omega, \mathcal{F}, P)\) be a probability space, and let \(Z : \Omega \to \mathbb{R}^d\) be a measurable map (a random \emph{representation}).  
Then for each \ILT\ \(Q \) the random variable \(Q(Z)\) satisfies 
\[
   \sigma\bigl(Q(Z)\bigr)
   \;=\;
   \sigma(Z),
\]
where \(\sigma(Z)\) denotes the \(\sigma\)-algebra generated by the random variable \(Z\). Consequently, 
\[
  \mathcal{Z} 
  \;=\;
  \bigl\{Q(Z) \;\mid\; Q \in \mathcal{Q}\bigr\}
\]
forms an equivalence class of representations that are indistinguishable from the viewpoint of measurable information.
\end{lemma}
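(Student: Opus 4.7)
The plan is to establish the $\sigma$-algebra equality via a two-sided inclusion argument, then leverage the group structure of invertible linear transformations to conclude that $\mathcal{Z}$ forms an equivalence class. The argument is essentially a direct consequence of the measurability of linear maps on $\mathbb{R}^d$ and the standard definition $\sigma(Z) = Z^{-1}(\mathcal{B}(\mathbb{R}^d))$.

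First I would prove $\sigma(Q(Z)) \subseteq \sigma(Z)$. Since $Q : \mathbb{R}^d \to \mathbb{R}^d$ is linear and hence continuous, it is Borel measurable. Therefore the composition $Q \circ Z$ is measurable with respect to $\sigma(Z)$, which immediately gives the inclusion. For the reverse inclusion $\sigma(Z) \subseteq \sigma(Q(Z))$, I would use that $Q$ is invertible with $Q^{-1}$ also linear, hence Borel measurable. Writing $Z = Q^{-1}(Q(Z))$ then shows that $Z$ is $\sigma(Q(Z))$-measurable, which yields the reverse inclusion. Combining the two gives the claimed equality $\sigma(Q(Z)) = \sigma(Z)$.

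For the equivalence class claim, I would verify the three defining properties of an equivalence relation on $\mathcal{Z}$, using that the family $\mathcal{Q}$ of \ILT s forms a group under composition (namely $\mathrm{GL}_d(\mathbb{R})$). Reflexivity follows by taking $Q$ to be the identity matrix. Symmetry follows because if $Z' = Q(Z)$, then $Z = Q^{-1}(Z')$ and $Q^{-1} \in \mathcal{Q}$. Transitivity follows from the fact that if $Z' = Q_1(Z)$ and $Z'' = Q_2(Z')$, then $Z'' = (Q_2 \circ Q_1)(Z)$ with $Q_2 \circ Q_1 \in \mathcal{Q}$. Combined with the sigma-algebra equality, any two elements of $\mathcal{Z}$ generate the same $\sigma$-algebra and are therefore indistinguishable from a measure-theoretic standpoint.

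I do not expect any serious obstacle in this proof: both directions of the sigma-algebra inclusion are one-line applications of measurability of continuous maps, and the group-theoretic verification is immediate. The only point worth being explicit about is that invertibility of $Q$ as a linear map guarantees $Q^{-1}$ exists and is itself linear and hence Borel measurable, which is what makes the argument symmetric in $Z$ and $Q(Z)$.
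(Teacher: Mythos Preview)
Your proposal is correct and follows essentially the same approach as the paper: both establish the two-sided inclusion using that $Q$ and $Q^{-1}$ are linear (hence continuous, hence Borel measurable), with the paper phrasing this via explicit preimages of Borel sets and you via the equivalent ``composition with a measurable map'' formulation. Your explicit verification of the equivalence-relation axioms from the group structure of $\mathrm{GL}_d(\mathbb{R})$ is a small addition the paper leaves implicit, but it is correct and harmless.
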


\begin{proof}
Each \(Q \in \mathcal{Q}\) is an invertible linear transformation. Consequently, \(Q\) is a Borel measurable bijection with a Borel measurable inverse. 
To show \(\sigma(Q(Z)) = \sigma(Z)\), consider any Borel set \(B \subseteq \mathbb{R}^d\). We have
\[
   \{\omega \in \Omega : Q(Z(\omega)) \in B\} = \{\omega \in \Omega : Z(\omega) \in Q^{-1}(B)\}.
\]
Since \(Q^{-1}(B)\) is Borel (as \(Q\) is a Borel isomorphism), the pre-image \(\{\omega : Z(\omega) \in Q^{-1}(B)\}\) belongs to \(\sigma(Z)\). Similarly, for any Borel set \(A \subseteq \mathbb{R}^d\),
\[
   \{\omega \in \Omega : Z(\omega) \in A\} = \{\omega \in \Omega : Q(Z(\omega)) \in Q(A)\},
\]
which belongs to \(\sigma(Q(Z))\). Therefore, \(\sigma(Q(Z)) = \sigma(Z)\).
\end{proof}

\subsection{Proof of \cref{lem:smoothness_invariance}}\label{proof:smoothness_invariance}
\begin{proof}
We consider \( f \) being $C^s$ on the open domain \( D \subseteq \mathbb{R}^d \), so by definition, all partial derivatives of \( f \) up to order \( s \) exist and are continuous on \( D \). Further, we consider any invetible matrix \( Q \). Such linear transformations are known to be infinitely smooth (as all their partial derivatives of any order exist and are constant, hence continuous). 
Hence, the function $h= f \circ Q^{-1}$ is the composition of a $C^s$ function \( f\) with a linear (and thus \( C^{\infty} \)) map \( Q^{-1} \). 

Applying the multivariate chain rule, we can easily verify that the differentiability properties of \( h \) are inherited from those of \( f \) and the linear transformation \( Q^{-1} \). Specifically, since \( Q^{-1} \) is \( C^{\infty} \), and \( f \) is \( C^s \), their composition \( h \) retains the \( C^s \) smoothness. Lastly, the (transformed) domain \( Q(D) \) is also open as linear (and thus continuous) transformations preserve the openness of sets in \( \mathbb{R}^d \). Therefore, \( h \) is well-defined and \( C^s \) on \( Q(D) \).
% The above result also holds if $Q$ corresponds to a reflection, permutation, or uniform scaling matrix. This follows from the fact that all of these are orthogonal matrices and thus linear transformations, which are \( C^{\infty} \). 
\end{proof}

\subsection{Proof of \cref{lem:noninvariance}}\label{proof:lem_noninvariance}
\begin{proof}
Suppose that $Q$ is an invertible matrix representing the linear map $z \mapsto Q(z)$.
Denote by $\tilde Q = Q^{-1}$ its inverse and its rows by $\tilde q_1, \dots, \tilde q_d$.

% Without loss of generality, we consider the two-dimensional case ($d = 2$). The extension to higher dimensions follows by considering that any $d$-dimensional rotation can be decomposed into a series of two-dimensional rotations. As we consider the two-dimensional case, any rotation can be achieved via the two dimensional orthogonal rotation matrix 
% \begin{equation*}
%     Q(\theta) = 
%     \begin{pmatrix}
%         \cos \theta & -\sin \theta \\
%         \sin \theta & \cos \theta
%     \end{pmatrix},
% \end{equation*}
% %
% representing a rotation by angle \( \theta \). Assume \( Q \) is sampled from a distribution where \( \theta \) is absolutely continuous over \( [0, 2\pi) \). Define the rotated coordinates \( \tilde{x} = Q x \), and the transformed function $h(\tilde{x}) = f(Q^{-1} \tilde{x}).$

\subsection*{(i) Additivity}

Assume that $f : \mathcal{X} \to \mathbb{R}$ is additive, where
$\mathcal{X} \subseteq \mathbb{R}^d$, such that 
\begin{align*}
    f(x) = \sum_{j = 1}^d f_j(x_j),
\end{align*}
and suppose that at least one $f_j$ is nonlinear. Now consider the transformed input space $\tilde{\mathcal{X}}\coloneqq Q(\mathcal{X})=\{Qx \mid x \in \mathcal{X}\},$
induced by the invertible linear transformation $Q$.
Let $h : \tilde{\mathcal{X}} \to \mathbb{R}$ be given by 
$ h(\tilde{x})\coloneqq f(Q^{-1}\tilde{x}).$
Then $h$ represents the same mapping as $f$ but expressed in the
transformed coordinate system \( \tilde{\mathcal{X}} \). In particular, $
    h(\tilde{x}) = f(x),
    \
    \forall\,\tilde{x} \in \tilde{\mathcal{X}}.$
Further, we have
\begin{align*}
    h(\tilde x) =  \sum_{j = 1}^d f_j(\tilde q_j^\top \tilde x).
\end{align*}
Assume without loss of generality that $f_1$ is nonlinear. The set of invertible matrices where $\tilde q_1$ equals a multiple of a standard basis vector has Haar measure 0. Hence, 
$f_1(\tilde q_1^\top \tilde x)$ is almost everywhere a nonlinear function of all coordinates of $\tilde x$, implying that $h$ is not additive.
%
% Assume \( f \) is additive of the form $f(x_1, x_2) = f_1(x_1) + f_2(x_2),$ where at least one of \( f_1 \) or \( f_2 \) is non-linear. Under coordinate rotation via $Q(\theta)$ we get that:
% \begin{equation*}
%     h(\tilde{x}_1, \tilde{x}_2) = f_1(\cos \theta \cdot \tilde{x}_1 + \sin \theta \cdot \tilde{x}_2) + f_2(-\sin \theta \cdot \tilde{x}_1 + \cos \theta \cdot \tilde{x}_2).
% \end{equation*}
% %
% Now, given that at least one of \( f_1 \) or \( f_2 \) is non-linear by assumption, the expressions \( f_1(\cos \theta \cdot \tilde{x}_1 + \sin \theta \cdot \tilde{x}_2) \) and/or \( f_2(-\sin \theta \cdot \tilde{x}_1 + \cos \theta \cdot \tilde{x}_2) \) introduce interactions between \( \tilde{x}_1 \) and \( \tilde{x}_2 \) that cannot be decomposed into separate functions of \( \tilde{x}_1 \) and \( \tilde{x}_2 \), unless \( \theta \) aligns the rotated coordinates with the original axes. Such alignment occurs only if \( \theta \in \{0, \frac{\pi}{2}, \pi, \frac{3\pi}{2}\} \), a set of measure zero in \( [0, 2\pi) \). Therefore, for almost every \( \theta \), \( h \) is not additive in \( \tilde{x} \) .

\subsection*{(ii) Sparsity}

Assume $f : \mathcal{X} \to \mathbb{R}$, where
$\mathcal{X} \subseteq \mathbb{R}^d$, is sparse linear of the form $f(x) = \beta^\top x$ with $1 \le \|\beta\|_0 < d$. We again consider the transformed input space $\tilde{\mathcal{X}}\coloneqq Q(\mathcal{X})=\{Qx \mid x \in \mathcal{X}\},$
induced by the invertible linear transformation $Q$, and define $h : \tilde{\mathcal{X}} \to \mathbb{R}$ given by 
$ h(\tilde{x})\coloneqq f(Q^{-1}\tilde{x}).$
Then we have $h(\tilde x) = f(Q^{-1} \tilde x) = \beta^\top Q^{-1} \tilde x =: \tilde \beta^\top \tilde x$. While the map $h$ is still linear, the set of matrices $Q$ such that  $\|\tilde \beta\|_0 = \| \beta^\top Q^{-1}\|_0 \neq d$ has Haar measure zero. Hence, $h$ is almost everywhere not sparse.
% Assume \( f \) is sparse linear of the form $f(x_1, x_2) = \beta_1 x_1 + \beta_2 x_2,$ with at least one \( \beta_j = 0 \).
% Without loss of generality, assume \( \beta_2 = 0 \), s.t. $f(x_1, x_2) = \beta_1 x_1$. 
% Under coordinate rotation via $Q(\theta)$ we get that:
% \begin{equation*}
%     h(\tilde{x}_1, \tilde{x}_2) = \beta_1 (\cos \theta \cdot \tilde{x}_1 + \sin \theta \cdot \tilde{x}_2).
% \end{equation*}
% To proof the result by contradiction, we assume that \( h \) is sparse in \( \tilde{x} \), i.e., depends only on a subset of \( \tilde{x}_1, \tilde{x}_2 \). This requires:
% \begin{equation*}
%     \beta_1 \cos \theta = 0 \quad \text{or} \quad \beta_1 \sin \theta = 0.
% \end{equation*}
% Since \( \beta_1 \neq 0 \), this implies \( \cos \theta = 0 \) or \( \sin \theta = 0 \), i.e., \( \theta \in \{0, \frac{\pi}{2}, \pi, \frac{3\pi}{2}\} \), a set of measure zero. Hence, for almost every \( \theta \), \( h \) depends on both \( \tilde{x}_1 \) and \( \tilde{x}_2 \), i.e., \( h \) is not sparse in \( \tilde{x} \) .
%
% \subsection*{Extension to Higher Dimensions}
%
% For \( d > 2 \), any rotation can be decomposed into a sequence of two-dimensional rotations in coordinate planes. Each such rotation introduces interactions between pairs of variables, ensuring that additivity and sparsity are disrupted unless \( f \) is fully linear and non-sparse across all dimensions. Given that \( f \) is either additive with non-linear components or sparse linear, generic rotations will almost surely invalidate these properties.
\end{proof}

\subsection{Proof of \cref{lem:intr_dim_invariance}}\label{proof:intr_dim_invariance}
\begin{proof}
As in the previous proof in \cref{proof:smoothness_invariance}, it is essential to note that ILTs \( Q \) are linear, invertible maps that are  $C^{\infty}$ (infinitely differentiable) with inverses that are likewise $C^{\infty}$. Specifically, \( Q \) serves as a global diffeomorphism on \( \mathbb{R}^d \), ensuring that both \( Q \) and \( Q^{-1} \) are smooth $(C^{\infty})$ functions.

Given that \( M \) is a \( d_{\Mcal} \)-dimensional smooth manifold, for each point $x$ on the manifold (\( x \in M \)), there exists a neighborhood \( U \subseteq M \) and a smooth chart \( \varphi: U \to \mathbb{R}^{d_{\Mcal}} \) that is a diffeomorphism onto its image. Applying the orthogonal transformation \( Q \) to \( M \) results in the set \( Q(M) \), and correspondingly, the image \( Q(U) \subseteq Q(M) \).
To construct a smooth chart for \( Q(M) \), we can consider the map
\begin{equation*}
        \tilde{\varphi}: Q(U) \to \mathbb{R}^{d_{\Mcal}}, \quad \tilde{\varphi}(Q(x)) = \varphi(x),
\end{equation*}
where \( x \in U \). Since \( Q \) is a diffeomorphism, the composition \( \tilde{\varphi} = \varphi \circ Q^{-1} \) restricted to \( Q(U) \) remains a smooth diffeomorphism onto its image. Hence, this defines a valid smooth chart for \( Q(M) \). Covering \( Q(M) \) with such transformed charts derived from those of \( M \) ensures that \( Q(M) \) inherits a smooth manifold structure. Each chart \( \tilde{\varphi} \) smoothly maps an open subset of \( Q(M) \) to an open subset of \( \mathbb{R}^{d_{\Mcal}} \), preserving the intrinsic dimension.
Therefore, the intrinsic dimension \( d_{\Mcal} \) of the manifold \( M \) is preserved under any orthogonal transformation \( Q \), and \( Q(M) \) remains a \( d_{\Mcal} \)-dimensional smooth manifold in \( \mathbb{R}^d \).
\end{proof}

\subsection{Proof of \cref{lem:hcm_invariance}}

\begin{proof}
    Recall that $Q$ is an invertible linear map, $f_0 = f \circ \psi \colon \Mcal \to \R$, and $\tilde f_0 = f_0 \circ \psi \circ Q^{-1} \colon Q(\Mcal) \to \R$. 
    Write $\tilde f= f \circ \tilde \psi$ with $\tilde \psi = \psi \circ Q^{-1} \colon  Q(\Mcal) \to \R$.  Since $\Mcal$ is a smooth manifold, $Q(\Mcal)$ is a smooth manifold with the same intrinsic dimension $d_{\Mcal}$ by  \cref{lem:intr_dim_invariance}. Since $z \mapsto Q^{-1}$ is continuous and $\Mcal$ is compact, $Q(\Mcal)$ is also compact.  Next, since $\psi$ is $s_\psi$-smooth by assumption, $\tilde \psi$ is also $s_\psi$-smooth by \cref{lem:smoothness_invariance}.
    Finally, the HCM part $f$ in the two models $f_0$ and $\tilde f_0$ is the same, so they share the same constraint set $\Pcal$. This concludes the proof.
\end{proof}

%%%%%%%%%%%%%%%%%%%%%%%%%%%%%%%%%%%%%%%%%%%%%%%%%%%%%%%%%%%%%%%%%%%%%%%%%%%%%%%
%%%%%%%%%%%%%%%%%%%%%%%%%%%%%%%%%%%%%%%%%%%%%%%%%%%%%%%%%%%%%%%%%%%%%%%%%%%%%%%

\subsection{Proof of Theorem \ref{thm:rate}}

We will use Theorem 3.4.1 of \citet{van2023weak} to show that the neural network $\hat f$ converges at the rate stated in the theorem. For ease of reference, we restate a slightly simplified version of the theorem adapted to the notation used in our paper. Here and in the following, we write $a \lesssim b$ to indicate $a \le Cb$ for a constant $C \in (0, \infty)$ not depending on $n$.

\begin{proposition} \label{thm:rate-cond}
	Let $\Fcal_n$ be a sequence of function classes, $\ell$ be some loss function, $f_0$ the estimation target, and
	\begin{align*}
		\hat f = \argmin_{f \in \Fcal_n} \frac 1 n \sum_{i = 1}^n \ell(f(Z_i), Y_i).
	\end{align*}
	Define $\Fcal_{n, \delta} = \{f \in \Fcal_n \colon \|f - f_0\|_{L_2(P_Z)} \le \delta\}$ and suppose that for every $\delta > 0$, it holds
	\begin{align} \label{eq:rate-cond-hess} \tag{\ref*{thm:rate-cond}.1}
		\inf_{f \in \Fcal_{n, \delta} \setminus \Fcal_{n, \delta/2}} \E[\ell(f(Z), Y)] - \E[\ell(f_0(Z), Y)] \gtrsim \delta^2,
	\end{align}
	and, writing $\bar \ell_f(z, y) = \ell(f(z), y) - \ell(f_0(z), y)$, that
	\begin{align} \label{eq:rate-cond-ep} \tag{\ref*{thm:rate-cond}.2}
		\E\left[\sup_{f \in \Fcal_{n, \delta}}\left| \frac{1}{n} \sum_{i = 1}^n \bar \ell_f(Z_i, Y_i) - \E[\bar \ell_f(Z, Y)] \right| \right] \lesssim \frac{\phi_n(\delta)}{\sqrt{n}},
	\end{align}
	for functions $\phi_n(\delta)$ such that $\delta \mapsto \phi_n(\delta)/\delta^{2 - \eps}$ is decreasing for some $\eps > 0$. If there are $\widetilde f_0 \in \Fcal_{n}$ and $\eps_n \ge 0$ such that
	\begin{align}
		\label{eq:rate-cond-bias} \eps_n^2        & \gtrsim  \E[\ell(\tilde f_0(Z), Y)] - \E[\ell(f_0(Z), Y)], \tag{\ref*{thm:rate-cond}.3} \\
		\label{eq:rate-cond-rate}  \phi_n(\eps_n) & \lesssim \sqrt{n} \eps_n^2, \tag{\ref*{thm:rate-cond}.4} 
	\end{align}
	it holds $\| \hat f - f_0 \|_{L_2(P_Z)} = O_p(\eps_n).$
\end{proposition}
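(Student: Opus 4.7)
The plan is to run the classical peeling argument from empirical process theory. First, decompose the event of interest into the disjoint union of shells
\[
    S_j = \{f \in \Fcal_n \,:\, 2^{j-1}\eps_n < \|f - f_0\|_{L_2(P_Z)} \le 2^j \eps_n\}, \qquad j \ge M,
\]
so that proving $\sum_{j \ge M} P(\hat f \in S_j) \to 0$ as $M \to \infty$ (uniformly in $n$) is exactly the $O_p(\eps_n)$ claim. The remainder is a reduction of each $P(\hat f \in S_j)$ to an empirical-process deviation that hypotheses \eqref{eq:rate-cond-hess}--\eqref{eq:rate-cond-rate} control geometrically in $j$.

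To carry out the reduction, start from the basic ERM inequality. Since $\tilde f_0 \in \Fcal_n$,
\[
    \frac{1}{n}\sum_i \bar\ell_{\hat f}(Z_i, Y_i) \;\le\; \frac{1}{n}\sum_i \bar\ell_{\tilde f_0}(Z_i, Y_i),
\]
and after centering both sides by their expectations one obtains
\[
    \E[\bar\ell_{\hat f}(Z,Y)] \;\le\; G_n(\hat f) \;+\; G_n(\tilde f_0) \;+\; \E[\bar\ell_{\tilde f_0}(Z,Y)],
\]
where $G_n(f) = \E[\bar\ell_f] - \frac{1}{n}\sum_i \bar\ell_f(Z_i, Y_i)$ is the centered empirical process. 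On $\{\hat f \in S_j\}$, the curvature bound \eqref{eq:rate-cond-hess} lower bounds the left-hand side by a constant times $(2^{j-1}\eps_n)^2$; the bias $\E[\bar\ell_{\tilde f_0}]$ is at most $\eps_n^2$ by \eqref{eq:rate-cond-bias}; and a contrapositive use of \eqref{eq:rate-cond-hess} converts this bias bound into $\|\tilde f_0 - f_0\|_{L_2(P_Z)} \lesssim \eps_n$, so that $\tilde f_0 \in \Fcal_{n, C\eps_n}$. Both empirical-process terms are therefore dominated by $\sup_{f \in \Fcal_{n, 2^j \eps_n}} |G_n(f)|$ for $j$ sufficiently large.

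Applying Markov's inequality together with \eqref{eq:rate-cond-ep} then yields
\[
    P(\hat f \in S_j) \;\lesssim\; \frac{\phi_n(2^j \eps_n)}{\sqrt{n}\,(2^{j-1}\eps_n)^2}.
\]
The monotonicity of $\delta \mapsto \phi_n(\delta)/\delta^{2-\eps}$ gives $\phi_n(2^j \eps_n) \le 2^{j(2-\eps)} \phi_n(\eps_n)$; substituting the rate balance \eqref{eq:rate-cond-rate} collapses the bound to a constant multiple of $2^{-j\eps}$, which is summable. Summing over $j \ge M$ yields $O(2^{-M\eps}) \to 0$ as $M \to \infty$, completing the proof.

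The delicate part I anticipate is isolating and absorbing the bias contribution from $\tilde f_0$ without circularity. One has to invoke \eqref{eq:rate-cond-ep} a second time, at the radius $C\eps_n$, to control $G_n(\tilde f_0)$, and then choose the starting shell index $M$ large enough that both $\E[\bar\ell_{\tilde f_0}]$ and the corresponding empirical-process term are strictly dominated by the quadratic lower bound $(2^{j-1}\eps_n)^2$ and can be moved to the right-hand side. Everything else is standard bookkeeping of absolute constants and an application of the monotonicity of $\phi_n(\delta)/\delta^{2-\eps}$.
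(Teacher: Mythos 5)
The paper does not prove this proposition at all: it is explicitly stated as ``a slightly simplified version'' of Theorem 3.4.1 of van der Vaart and Wellner, adapted to the paper's notation, and the proposition is justified purely by that citation. Your peeling argument is exactly the proof that underlies the cited result, and your sketch is essentially correct end to end. The shell decomposition, the conversion of $\{\hat f \in S_j\}$ into a lower bound $\E[\bar\ell_{\hat f}]\gtrsim (2^j\eps_n)^2$ via \eqref{eq:rate-cond-hess}, the absorption of the approximation error $\E[\bar\ell_{\tilde f_0}]\lesssim\eps_n^2$ after a contrapositive use of \eqref{eq:rate-cond-hess} to place $\tilde f_0\in\Fcal_{n,C\eps_n}$, the Markov step with \eqref{eq:rate-cond-ep}, and the use of the monotonicity of $\delta\mapsto\phi_n(\delta)/\delta^{2-\eps}$ together with \eqref{eq:rate-cond-rate} to obtain a geometrically summable tail $2^{-j\eps}$ are all the standard moves, correctly identified. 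One cosmetic slip: centering the ERM inequality gives $\E[\bar\ell_{\hat f}]\le G_n(\hat f) - G_n(\tilde f_0) + \E[\bar\ell_{\tilde f_0}]$ rather than $+G_n(\tilde f_0)$, but this is immaterial since you immediately pass to $|G_n(\hat f)| + |G_n(\tilde f_0)|$ and then to the supremum over $\Fcal_{n,2^j\eps_n}$. In short, you reconstructed the proof the paper outsources; both routes give the same result, with your version making the mechanism explicit while the paper's saves space by referencing the textbook.
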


\begin{proof}[Proof of Theorem \ref{thm:rate}]
	Define  $(s^*, d^*) = \argmin_{(s, p) \in \Pcal \cup (s_\psi, d_{\Mcal})} s / p$ and denote the targeted rate of convergence by
\begin{align*}
	\eps_n = \max_{(s, p) \in \Pcal \cup (s_\psi, d_{\Mcal})} n^{-\frac{s}{2s + p}}  (\log n)^4 = n^{-\frac{s^*}{2s^* + d^*}} (\log n)^4.
\end{align*}
We now check the conditions of \cref{thm:rate-cond}. 

\paragraph{Condition \eqref{eq:rate-cond-hess}:}

Follows from \cref{ass:A2}, since
\begin{align*}
	\inf_{f \in \Fcal_{n, \delta} \setminus \Fcal_{n, \delta/2}} \E[\ell(f(Z), Y)] - \E[\ell(f_0(Z), Y)] \ge 	\inf_{f \in \Fcal_{n, \delta} \setminus \Fcal_{n, \delta/2}} a \|f - f_0\|^2_{L_2(P_Z)} \ge \frac{a}{4} \delta^2.
\end{align*}

\paragraph{Condition \eqref{eq:rate-cond-ep}:}

Let $N(\eps, \Fcal, L_2(Q))$ be the minimal number of $\eps$-balls required to cover $\Fcal$ in the $L_2(Q)$-norm. Theorem 2.14.2 of \citet{van2023weak} states that eq.~\eqref{eq:rate-cond-ep} holds with
\begin{align*}
	\phi_n(\delta) = J_n(\delta)\left(1 + \frac{J_n(\delta)}{\delta^2 \sqrt{n}}\right),
\end{align*}
where
\begin{align*}
	J_n(\delta) = \sup_Q \int_0^{\delta} \sqrt{1 + \log N(\epsilon, \Fcal(L, \nu), L_2(Q))} d\epsilon,
\end{align*}
with the supremum taken over all probability measures $Q$.
\cref{lem:covering} in \cref{sec:auxialiary} gives
\begin{align*}
	J_n(\delta) \lesssim  \delta \sqrt{\log(1/\delta)} L \nu \sqrt{\log (L \nu)},
\end{align*}
which implies that $\delta \mapsto \phi_n(\delta) / \delta^{2 - 1/2}$ is decreasing, so the condition is satisfied.

\paragraph{Condition \eqref{eq:rate-cond-bias}:}

According to \cref{lem:approx} in \cref{sec:auxialiary} there are sequences $L_n = O(\log \eps_n^{-1})$, $\nu_n = O(\eps_n^{-d^*/2s^*})$ such that there is a neural network $\widetilde f_0 \in \Fcal(L_n, \nu_n)$ with
\begin{align*}
	\sup_{z \in \Mcal}| \widetilde f_0(z) - f_0(z) |  = O(\eps_n).
\end{align*}
Together with \cref{ass:A2}, this implies
\begin{align*}
	\E[\ell(\tilde f_0(Z), Y)] - \E[\ell(f_0(Z), Y)] \le b \| \tilde f_0-  f_0 \|_{L_2(P_Z)}^2 \le b 	\sup_{z \in \Mcal}| \widetilde f_0(z) - f_0(z) |^2 \lesssim \eps_n^2,
\end{align*}
as required.

\paragraph{Condition \eqref{eq:rate-cond-rate}:}

Using $L_n = O(\log \eps_n^{-1})$, $\nu_n = O(\eps_n^{-d^*/2s^*})$ and our bound on $J_n(\delta)$ from \cref{lem:covering}, we get
\begin{align*}
	J_n(\delta) \lesssim \delta \log^{1/2}(\delta^{-1})  \eps_n^{-\frac{d^*}{2s^*} }  \log^{3/2} (\eps_n^{-1}).
\end{align*}
Now observe that
\begin{align*}
	\frac{\phi_n(\eps_n) }{\eps_n^2} & \lesssim \eps_n^{-\frac{d^*}{s^*} - 1}  \log^2(\eps_n^{-1})  + \frac{\eps_n^{-\frac{d^*}{s^*} - 2} \log^4(\eps_n^{-1})}{\sqrt{n}} \\
	                                   & =  \eps_n^{-\frac{2s^* + d^*}{2s^*}}  \log^2(\eps_n^{-1})  + \eps_n^{-\frac{2s^* + d^*}{s^*}} \log^4(\eps_n^{-1}) n^{-1/2}        \\
	                                   & \lesssim n^{1/2} (\log n)^{-2} + n^{1/2},
\end{align*}
where the last step follows from our definition of $\eps_n$ and the fact that $\log (\eps_n^{-1}) \lesssim \log n$.
In particular, $\eps_n$ satisfies $\phi_n(\eps_n) \lesssim \sqrt{n}\eps_n^2$, which concludes the proof of the theorem.
\end{proof}

\subsection{Auxiliary results} \label{sec:auxialiary}

\begin{lemma} \label{lem:covering}
	Let $\Fcal(L, \nu)$ be a set of neural networks with $\sup_{f \in \mathcal{\Fcal(L, \nu)}} \|f\|_\infty < \infty$. For all $\delta > 0$ sufficiently small, it holds
	\begin{align*}
		\sup_Q \int_0^{\delta} \sqrt{1 + \log N(\epsilon, \Fcal(L, \nu), L_2(Q))} d\epsilon \lesssim  \delta \sqrt{\log(1/\delta)} L \nu \sqrt{\log (L \nu)}.
	\end{align*}
\end{lemma}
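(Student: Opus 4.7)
The plan is to bound the uniform entropy integral via a pseudo-dimension argument in three stages: control $\mathrm{Pdim}(\Fcal(L,\nu))$, convert this into a uniform covering number bound, and then evaluate the resulting integral in closed form.

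First I would invoke the sharp pseudo-dimension estimate for feed-forward networks with piecewise-linear activations (e.g.\ Bartlett--Harvey--Liaw--Mehrabian, 2019): writing $W$ for the number of trainable parameters in an architecture with $L$ layers and at most $\nu$ neurons per layer, one has $W = O(L\nu^2)$ and
\[
 V \;:=\; \mathrm{Pdim}(\Fcal(L,\nu)) \;\lesssim\; L\, W\, \log W \;\lesssim\; L^2 \nu^2 \log(L\nu),
\]
so that $\sqrt{V} \lesssim L\nu\sqrt{\log(L\nu)}$, which is precisely the factor appearing on the right-hand side of the claim. Combining this with the uniform boundedness hypothesis $B := \sup_{f\in\Fcal(L,\nu)} \|f\|_\infty < \infty$ and the classical conversion from pseudo-dimension to uniform $L_2$ covering numbers (Haussler's bound; see also Theorem 2.6.7 in van der Vaart and Wellner, 1996) then yields, uniformly in $Q$,
\[
 \log N(\epsilon, \Fcal(L,\nu), L_2(Q)) \;\lesssim\; V\, \log(B/\epsilon) \;\lesssim\; V\, \log(1/\epsilon)
\]
for all sufficiently small $\epsilon$.

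Next, I would plug this into the entropy integral and evaluate it using the elementary bound $\int_0^\delta \sqrt{\log(1/\epsilon)}\, d\epsilon \lesssim \delta\sqrt{\log(1/\delta)}$, which follows by a standard substitution $u = \log(1/\epsilon)$. Using $\sqrt{1 + V\log(1/\epsilon)} \lesssim \sqrt{V}\sqrt{\log(1/\epsilon)}$ for $\epsilon$ small enough that $V\log(1/\epsilon) \ge 1$, this gives
\[
 \sup_Q \int_0^\delta \sqrt{1 + \log N(\epsilon,\Fcal(L,\nu),L_2(Q))}\, d\epsilon \;\lesssim\; \sqrt{V}\, \delta\, \sqrt{\log(1/\delta)} \;\lesssim\; \delta\,\sqrt{\log(1/\delta)}\, L\nu\,\sqrt{\log(L\nu)},
\]
which is the asserted bound.

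The main obstacle I expect is essentially bookkeeping rather than anything conceptually deep: one must match the architecture convention used in \cref{thm:rate} to the one adopted in the cited pseudo-dimension reference (which is stated for networks with a fixed piecewise-polynomial activation and counted parameters), and absorb the dependence of $B$ on $(L,\nu)$ — at worst polynomial for ReLU networks with appropriately bounded weights — into the $\log(L\nu)$ factor, since such a polynomial $B$ only contributes an additive $\log(L\nu)$ term inside the $\log(B/\epsilon)$ estimate. Once the correct pseudo-dimension scaling is in place, the rest of the argument is a routine application of uniform-entropy machinery, and in particular poses no issue uniformly in $Q$ because the covering-number bound derived above already holds uniformly.
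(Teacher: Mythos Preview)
Your proposal is correct and follows essentially the same route as the paper: bound the pseudo/VC dimension of $\Fcal(L,\nu)$ via Bartlett--Harvey--Liaw--Mehrabian (2019) to get $V \lesssim L^2\nu^2\log(L\nu)$, convert to a uniform $L_2$ covering bound via Theorem~2.6.7 of van der Vaart--Wellner, and integrate $\sqrt{\log(1/\epsilon)}$ over $[0,\delta]$. The only cosmetic difference is that you phrase the complexity control through the pseudo-dimension whereas the paper writes ``VC dimension'', but for real-valued classes these coincide (VC of the subgraph class), and the extra bookkeeping you flag about $B$ and architecture conventions is not needed for the paper's purposes.
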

\begin{proof}
	Denote by  $\mathrm{VC}(\Fcal)$ the Vapnik-Chervonenkis dimension of the set $\Fcal$.
	By Theorem 2.6.7 in \citet{van2023weak}, it holds
	\begin{align*}
		\sup_Q \log N(\eps, \Fcal, L_2(Q)) \lesssim \log(1 / \eps) \mathrm{VC}(\Fcal),
	\end{align*}
	for $\eps > 0$ sufficiently small.
	By Theorem 7 of \citet{bartlett2019nearly}, we have
	\begin{align*}
		\mathrm{VC}(\mathcal{F}(L, \nu))\lesssim L^2 \nu^2 \log (L\nu).
	\end{align*}
	For small $\eps$, this gives
	\begin{align*}
		\sup_Q \sqrt{1 + \log N(\eps, \Fcal(L, \nu), L_2(Q))} \lesssim\sqrt{\log(1 / \eps)} L \nu \sqrt{\log (L\nu)},
	\end{align*}
	Integrating the right-hand side gives the desired result.
\end{proof}

\begin{lemma} \label{lem:approx}
	Suppose $f_0$ satisfies \cref{ass:A1} for a given constraint set $\Pcal$ and $(s_\psi, d_{\Mcal})$. Define  $(s^*, d^*) = \argmin_{(s, p) \in \Pcal \cup (s_\psi, d_{\Mcal})} s / p.$
	Then for any $\eps > 0$ sufficiently small, there is a neural network architecture $\Fcal(L, \nu)$ with $L = O(\log \eps^{-1})$, $\nu = O(\eps^{-d^*/2s^*})$ such that there is $\widetilde f_0 \in \Fcal(L, \nu)$ with
	\begin{align*}
		\sup_{z \in \Mcal}| \widetilde f_0(z) - f_0(z) |  = O(\eps).
	\end{align*}
\end{lemma}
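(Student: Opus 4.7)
The plan is to decompose the approximation of $f_0 = f \circ \psi$ into two pieces: construct a ReLU network $\tilde \psi$ approximating $\psi$ on $\Mcal$ and a ReLU network $\tilde f$ approximating $f$ on a compact set containing $\psi(\Mcal)$, then define $\widetilde f_0 \coloneqq \tilde f \circ \tilde \psi$. Since ReLU networks are closed under composition, $\widetilde f_0$ will lie in $\Fcal(L_\psi + L_f, c\max(\nu_\psi, \nu_f))$ for an absolute constant $c$. Allotting $O(\eps)$ error to each component will give a total sup-norm error of $O(\eps)$, and the overall width will be dictated by whichever of $\psi$ or $f$ has the worst smoothness-to-dimension ratio.

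For the manifold piece, I would use the established technology for ReLU approximation of smooth functions on low-dimensional manifolds (e.g., \citet{chen2019efficient, schmidt2019deep, nakada2020adaptive}): take a finite atlas of $\Mcal$ with a smooth partition of unity, pull each coordinate $\psi_j$ back to a function on $[0,1]^{d_\Mcal}$ via the charts, approximate the pulled-back function by the classical ReLU construction for $s_\psi$-smooth functions on cubes, and realize both the partition of unity and the chart maps by small ReLU subnetworks. Since $p$ is a fixed constant, stacking the $p$ coordinate networks yields $\tilde \psi$ of depth $L_\psi = O(\log \eps^{-1})$ and width $\nu_\psi = O(\eps^{-d_\Mcal/(2 s_\psi)})$ with $\sup_{z \in \Mcal}\|\psi(z) - \tilde\psi(z)\|_\infty \le \eps'$ for any prescribed $\eps' > 0$.

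For the HCM piece, I would invoke the approximation theorem of \citet{kohler2021rate} (see also \citet{schmidt2020nonparametric}): on any fixed compact set $K \subset \R^p$ there is a ReLU network $\tilde f$ of depth $O(\log \eps^{-1})$ and width $O(\eps^{-\max_{(s,p) \in \Pcal} p/(2s)})$ with $\sup_{x \in K}|f(x) - \tilde f(x)| \le \eps$. I would take $K$ to be a slight enlargement of the compact set $\psi(\Mcal)$, and ensure that $\tilde\psi(\Mcal)\subseteq K$ by clipping $\tilde\psi$ componentwise to $K$, which is implementable by a constant-size ReLU subnetwork and can be absorbed into $\nu_\psi$.

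To conclude, let $L_{f,K}$ be the Lipschitz constant of $f$ on $K$, which is finite because each HCM building block is smooth on a compact set. The triangle inequality gives, for any $z \in \Mcal$,
\begin{align*}
|f_0(z) - \widetilde f_0(z)| \le L_{f,K}\,\|\psi(z) - \tilde\psi(z)\|_\infty + \sup_{x \in K}|f(x) - \tilde f(x)|.
\end{align*}
Choosing $\eps' = \eps/(L_{f,K} + 1)$ makes both terms $O(\eps)$, yielding the required uniform bound. The resulting network has depth $O(\log \eps^{-1})$ and width $O(\eps^{-d^*/(2 s^*)})$ by the definition of $(s^*, d^*)$ as the argmin of $s/p$ over $\Pcal \cup \{(s_\psi, d_\Mcal)\}$. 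The main obstacle is achieving the intrinsic-dimension rate in the first step: this relies on the manifold-adaptive construction (charts plus partition of unity plus local smooth approximation) whose fine structure and width accounting have been worked out in the cited references. The rest of the argument is bookkeeping, provided the HCM approximation is applied on a compact set whose choice does not inflate the constants.
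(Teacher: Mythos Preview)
Your proposal is correct and follows essentially the same three-step argument as the paper: approximate $\psi$ coordinatewise at the manifold rate, approximate the HCM $f$ on a compact set via \citet{kohler2021rate}, and compose using Lipschitzness of $f$ and the triangle inequality. The only cosmetic differences are that the paper invokes Theorem~2 of \citet{KOHLER2023160} directly (via Minkowski dimension) rather than spelling out the chart/partition-of-unity construction, and it handles the domain issue by enlarging the box $[-a,a]^d$ to contain $\tilde\psi(\Mcal)$ rather than clipping $\tilde\psi$ as you do.
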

\begin{proof}

	The proof proceeds in three steps. We first approximate the embedding component $\psi$ by a neural network $\widetilde \psi$, then the HCM component $f$ by a neural network $\widetilde f$. Finally, we concatenate the networks to approximate the composition $f_0 = f \circ \psi$ by $\widetilde f_0 = \widetilde f \circ \widetilde \psi$.

	\paragraph{Approximation of the embedding component.}

	Recall that $\psi \colon \Mcal \to \R^d$ is a $s_\psi$-smooth mapping.
	Write $\psi(z) = (\psi_1(z), \ldots, \psi_d(z))$ and note that each $\psi_j \colon \Mcal \to \R$ is also $s_\psi$-smooth.
	Since $\Mcal$ is a smooth $d_{\Mcal}$-dimensional manifold, it has Minkowski dimension $d_{\Mcal}$. Then Theorem 2 of \citet{KOHLER2023160} (setting $M = \eps^{-1/2s_\psi}$ in their notation) implies that there is a neural network $\widetilde \psi_j \in \Fcal(L_{\psi}, \nu_{\psi})$ with $
		L_{\psi} = O(\log \eps^{-1})$ and $\nu_{\psi} = O(\eps^{-d_{\Mcal}/2s_\psi})$ such that
	\begin{align*}
		\sup_{z \in \Mcal}| \widetilde \psi_j(z) - \psi_j(z) |  = O(\eps).
	\end{align*}
	Parallelize the networks $\widetilde \psi_j$ into a single network $\widetilde \psi := (\widetilde \psi_1, \ldots, \widetilde \psi_d) \colon \Mcal \to \R^d$. By construction, the parallelized network $\widetilde \psi$ has $L_{\psi}$ layers, width $d\times \nu_{\psi} = O( \nu_{\psi})$, and satisfies
	\begin{align*}
		\sup_{z \in \Mcal}\| \widetilde \psi(z) - \psi(z) \|  = O(\eps).
	\end{align*}

	\paragraph{Approximation of the HCM component.}

	Let $a \in (0, \infty)$ be arbitrary.
	By Theorem 3(a) of \citet{kohler2021rate} (setting $M_{i, j} = \eps^{-1/2p_{j}^{(i)}}$ in their notation), there is a neural network $\widetilde f \in \Fcal(L_{f}, \nu_{f})$ with $L_{f} = O(\log \eps^{-1})$ and $\nu_{f} = O(\eps^{-d^*/2s^*})$ such that
	\begin{align*}
		\sup_{x \in [-a, a]^d}| \widetilde f(x) -f(x) |  = O(\eps),
	\end{align*}

	\paragraph{Combined approximation.} Now concatenate the networks $\widetilde \psi$ and $\widetilde f$ to obtain the network $\widetilde f_0 = \widetilde f \circ \widetilde \psi \in \Fcal(L_\psi + L_f, \max\{\nu_\psi, \nu_f\})$. Observe that $L_\psi + L_f = O(\log \eps^{-1})$ and $\nu_\psi + \nu_f = O(\eps^{-d^*/2s^*})$, so the network has the right size. It remains to show that its approximation error is sufficiently small.
	Define
	\begin{align*}
		\gamma := \sup_{z \in \Mcal}\| \widetilde \psi(z) - \psi(z) \|,
	\end{align*}
	which is $O(\eps)$ by the construction of $\widetilde \psi$,
	\begin{align*}
		a := \sup_{z \in \Mcal}\|\psi(z)\| + \gamma,
	\end{align*}
	which is $O(1)$ by assumption,
	and
	\begin{align*}
		K := \sup_{x, x'} \frac{| f(x) - f(x') |}{\|x - x'\|},
	\end{align*}
	which is finite since $f$ is Lipschitz due to $\min_{(s, d) \in \Pcal} s \ge 1$ and the fact that finite compositions of Lipschitz functions are Lipschitz.
	By the triangle inequality, we have
	\begin{align*}
		\sup_{z \in \Mcal}| \widetilde f_0(z) - f_0(z) |
		 & \le \sup_{z \in \Mcal}| \widetilde f(\widetilde \psi(z)) - f(\widetilde \psi(z)) | + \sup_{z \in \Mcal}\| f(\widetilde \psi(z)) - f(\psi(z)) \| \\
		 & \le \sup_{x \in [-a, a]^d} | \widetilde f(x) - f(x) | + K    \\
		 & = O(\eps),
	\end{align*}
	as claimed.
\end{proof}

\subsection{Proof of Theorem \ref{thm:dml}}

\begin{proof}
    We validate the conditions of Theorem II.1 of \citet{chernozhukov2017double}. Our \cref{ass:A3} covers all their moment and boundedness conditions on $g$ and $m$. By \cref{thm:rate}, we further know that
    \begin{align*}
        \| \hat m^{(k)} - m \|_{L_2(P_Z)} + \| \hat g^{(k)} - g \|_{L_2(P_Z)} = o_p(1).
    \end{align*}
    Further, \cref{thm:rate} yields 
    \begin{align*}
         \| \hat m^{(k)} - m \|_{L_2(P_Z)} \times \| \hat g^{(k)} - g \|_{L_2(P_Z)} 
         &= O_p\left( \max_{(s, p) \in \Pcal_g \cup (s_\psi, d_{\Mcal})} n^{-\frac{s}{2s + p}} \times  \max_{(s', p') \in \Pcal_m \cup (s_\psi', d_{\Mcal})} n^{-\frac{s'}{2s' + p'}}\right) \\
         &= O_p\left( \max_{(s, p) \in \Pcal_g \cup (s_\psi, d_{\Mcal})}  \max_{(s', p') \in \Pcal_m \cup (s_\psi', d_{\Mcal})} n^{- \left(\frac{s}{2s + p} + \frac{s'}{2s' + p'} \right)} \right).
    \end{align*}
    We have to show that the term on the right is of order $o_p(n^{-1/2})$.
    Observe that
    \begin{align*}
         \frac{s}{2s + p} + \frac{s'}{2s' + p'} > \frac{1}{2} \quad  
            & \Leftrightarrow \quad  \frac{1}{2 + p/s} + \frac{1}{2 + p'/s'} >  \frac{1}{2} \\
            & \Leftrightarrow \quad  \frac{4 + p/s + p'/s'}{(2 + p/s)(2 + p'/s')} >   \frac{1}{2} \\
            & \Leftrightarrow \quad 4 + p/s + p'/s'> 2 + p/s + p'/s' + \frac
            {pp'}{2ss'} \\
            & \Leftrightarrow \quad 4 > \frac
            {pp'}{ss'}.
  \end{align*}
  Thus, our condition
   \begin{align*}
        \min_{(s, p) \in \Pcal_g \cup (s_\psi, d_{\Mcal})} \frac{s}{p} \times \min_{(s', p') \in \Pcal_m \cup (s_\psi', d_{\Mcal})} \frac{s'}{p'} > \frac 1 4,
    \end{align*}
    implies
  \begin{align*}
      \| \hat m^{(k)} - m \|_{L_2(P_Z)} \times \| \hat g^{(k)} - g \|_{L_2(P_Z)}  = o_p(n^{-1/2}),
  \end{align*}
  as required.
\end{proof}

\section{Additional Related Literature \& Visualizations} \label{app:furtherlit}

\subsection{Empirical Evidence of Low Intrinsic Dimensions} \label{app:empevidID}

Using different intrinsic dimension (ID) estimators such as the maximum likelihood estimator \citep[MLE;][]{levina2004maximum} on popular image datasets such as ImageNet \cite{deng2009imagenet}, several works find clear empirical evidence for low ID of both the image data and related latent features obtained from pre-trained NNs \citep{gong2019intrinsic, ansuini2019intrinsic, pope2021intrinsic}. The existence of the phenomenon of low intrinsic dimensions was also verified in the medical imaging \cite{konz2024intrinsic} and text-domain \cite{aghajanyan2020intrinsic}. All of the mentioned research finds a striking inverse relation between intrinsic dimensions and (state-of-the-art) model performance, which nicely matches the previously introduced theory about ID-related convergence rates.

\vspace{-1mm}
\subsection{Hierarchical Composition Model (HCM) Visualization} \label{app:hcm_vis}

\vspace{-1mm}
This section provides an illustration of the Hierarchical Composition Model (HCM) that was formally introduced in \cref{def:hcm}. As the name suggests, every HCM is a composition of HCMs of lower level. In \cref{fig:hcm_vis} we give an illustration of a particular HCM of level 2 and constraint set $\mathcal{P}_{21}$, which we abbreviated by $HCM(2,\mathcal{P}_{21})$. Following the notation of \cref{def:hcm}, the $HCM(2,\mathcal{P}_{21})$ corresponds to the function $f: \mathbb{R}^{d}\to \mathbb{R}$ defined by $f(x)=h^{[2]}_1 (h^{[1]}_1(x), \ldots,h^{[1]}_p(x))$. Each $h^{[1]}_j(x)$ for $j \in \{1, \ldots,p\}$ corresponds to a HCM of level 1, which itself are compositions of HCMs of level 0. Each of the latter corresponds to a feature in the data. The constraint set of each HCM corresponds to the collection of pairs of the degree of smoothness and number of inputs of each HCM it is composed of. For example, assuming that $h^{[2]}_1$ is a $s$-smooth function, then the constraint set of the $HCM(2, \mathcal{P}_{21})$ function $f$ is $\mathcal{P}_{21} = \bigcup_{j=1}^p\mathcal{P}_{1j} \cup (s,p)$. 
The HCM framework fits both regression and classification. In the latter, the conditional probability would need to satisfy the HCM condition, and any non-linear link function for classification would just correspond to a simple function in the final layer of the HCM.
\vspace{-1mm}

\begin{figure}[ht!]
    \centering
    \includegraphics[width=0.9\textwidth]{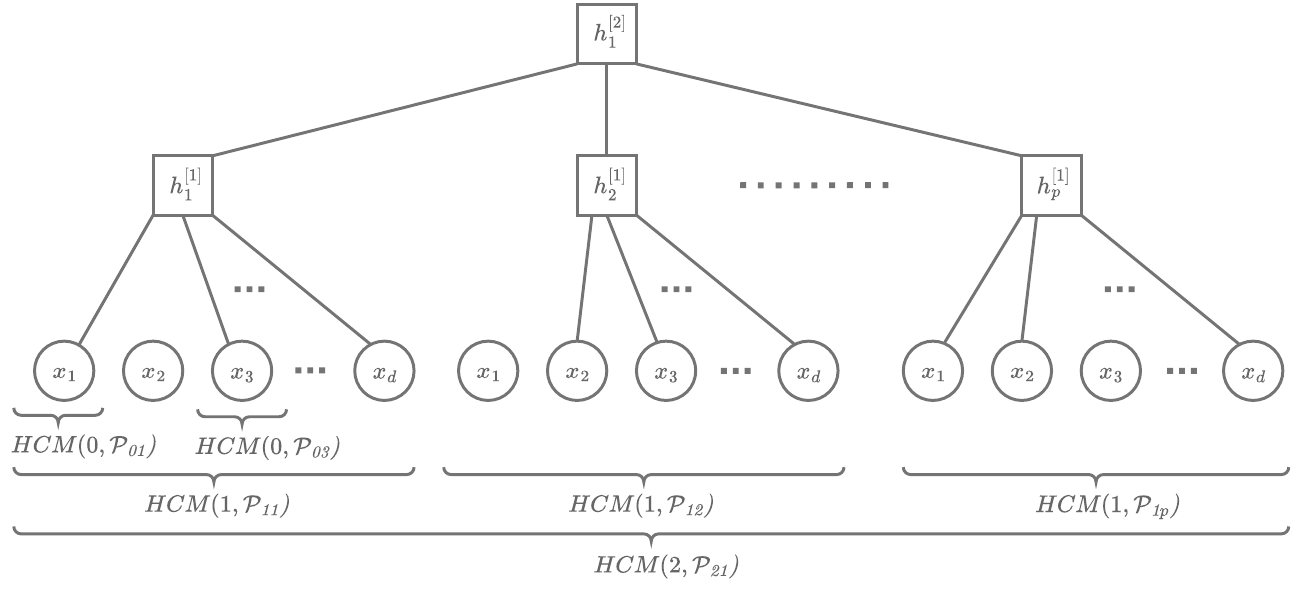}
    \vspace*{-6mm}
    \caption{\textit{Visualization of a HCM: The illustration depicts a HCM of level 2 and constraint set $\mathcal{P}_{21}$. %, which we abbreviated by $HCM(2, \mathcal{P}_{21})$. 
    The HCM of level 2 is a composition of HCMs of level 1, i.e., $h_1, h_2, \ldots, h_p$. The latter are itself compositions of HCMs of level 0, each corresponding to a feature of the data.
    }}\label{fig:hcm_vis}
\end{figure}
\vspace{-3mm}

\section{Experimental Details and Computing Environment} \label{app:expcomp}

\vspace{-1mm}
We conduct several simulation studies to investigate the performance of different Average Treatment Effect (ATE) estimators of a binary treatment on some outcome in the presence of a confounding induced by non-tabular data. In the experiments, the confounding is induced by the labels, i.e., the pneumonia status or the review as well as more complex functions of the pre-trained features. Nuisance function estimation is based on the pre-trained representations that are obtained from passing the non-tabular data through the pre-trained neural models and extracting the last hidden layer features.

\vspace{-1mm}
\subsection{Data and Pre-trained Models}\label{app:data_models}
\vspace{-1mm}
\paragraph{IMDb} For the text data, we utilize the IMDb Movie Reviews dataset from \citet{lhoest-etal-2021-datasets} consisting of 50,000 movie reviews labeled for sentiment analysis. For each review, we extract the [CLS] token, a 768-dimensional vector per review entry, of the pre-trained Transformer-based model BERT \citep{devlin-etal-2019-bert}. To process the text, we use BERT’s subword tokenizer (bert-base-uncased) and truncate sequences to a maximum length of 128 tokens. We use padding if necessary. After preprocessing and extraction of pre-trained representations, we sub-sampled 1,000 and 4,000 pre-trained representations for the two confounding setups to make the simulation study tractable. 

\paragraph{X-Ray} For the image data simulation, we use the dataset from \citet{kermany2018labeled} that originally contains 5,863 chest X-ray images of children that were obtained from routine clinical care in the Guangzhou Women and Children’s Medical Center, Guangzhou. We preprocess the data such that each patient appears only once in the dataset. This reduces the effective sample size to 3,769 chest X-rays. Each image is labeled according to whether the lung disease pneumonia is present or not. The latent features are obtained by passing the images through a pre-trained convolutional neural network and extracting the 1024-dimensional last hidden layer features of the model. For this purpose, we use a pre-trained Densenet-121 model from the TorchXRayVision library \citep{cohen2022torchxrayvision}. Specifically, we use the model called \textit{densenet121-res224-all}, which is a Densenet-121 model with resolution $224\times 224$ that was pre-trained on all chest X-ray datasets considered in \citet{cohen2020limits}. We chose this model for the extraction of pre-trained representation in our experiments, based on its superior performance in benchmark studies conducted in prior work \cite{cohen2020limits}. Note that the dataset from the Guangzhou Women and Children’s Medical Center that we use, was not used during the training of the model. This is important from a theoretical and practical viewpoint, as the confounding simulation via labels might otherwise be too easy to adjust for given that the model could have memorized the input data. However, using this kind of data we rule out this possibility. %For the text data, we utilize the IMDb Movie Reviews dataset from \citet{lhoest-etal-2021-datasets} consisting of 50,000 movie reviews labeled for sentiment analysis. 

\subsection{Confounding}\label{app:confounding}
As introduced in the main text, we simulate confounding both on the true labels of the non-tabular data as well as encodings from a trained autoencoder. While this induces a different degree of complexity for the confounding, the simulated confounding is somewhat similar in both settings. We first discuss the simpler setting of \textit{Label Confounding}. In all of the experiments, the true average treatment effect was chosen to be two.

\paragraph{Label Confounding} \textit{Label Confounding}  was induced by simulating treatment and outcome both dependent on the binary label. In the case of the label being one (so in case of pneumonia or in case of a positive review), the probability of treatment is 0.7 compared to 0.3 when the label is zero. The chosen probabilities guaranteed a sufficient amount of overlap between the two groups. The outcome $Y$ is simulated based on a linear model including a binary treatment indicator multiplied by the true treatment effect (chosen to be $2$), as well as a linear term for the label. Gaussian noise is added to obtain the final simulated outcome. The linear term for the label has a negative coefficient in order to induce a negative bias to the average treatment setup compared to a randomized setting. Given that the confounding simulation is only based on the labels, the study was in fact randomized with respect to any other source of confounding.

\paragraph{Complex Confounding} To simulate \textit{Complex Confounding} with structural smoothness and sparsity, we first train an autoencoder (AE) with 5-dimensional latent space on the pre-trained representations, both in the case of the text and image representations. These AE-encodings are then used to simulate confounding similarly as in the previous experiment. The only difference is that we now sample the coefficients for the 5-dimensional AE-encodings. For the propensity score, these are sampled from a normal distribution, while the sampled coefficients for outcome regression are restricted to be negative, to ensure a sufficiently larger confounding effect, that biases naive estimation. We choose a 5-dimensional latent space to allow for sufficiently good recovery of the original pre-trained representations.

\subsection{ATE Estimators}\label{app:ate_estimators}
We estimate the ATE using multiple methods across 5 simulation iterations. In each of these, we estimate a \textit{Naive} estimator that simply regresses the outcome on treatment while not adjusting for confounding. The \textit{Oracle} estimator uses a linear regression of outcome on both treatment and the true label that was used to induce confounding. The S-Learner estimates the outcome regression function $g(t,z) = \mathbb{E}[Y \mid T=t, Z = z]$ by fitting a single model $\hat{g}(t,z)$ to all data, treating the treatment indicator as a feature. The average treatment effect estimate of the S-Learner is then given by
\begin{equation*}
    \widehat{ATE_S} = \frac{1}{n} \sum_{i = 1}^n \hat{g}(1,z_i) - \hat{g}(0,z_i).
\end{equation*}
In contrast, the Double Machine Learning (DML) estimators estimates both the outcome regression function and the propensity score to obtain its double robustness property. In our experiments, DML estimators use the partialling-out approach for ATE estimation, which is further discussed in the next paragraph.
%as defined in \cref{sec:val_dml}. 

In the \textit{Label Confounding} experiments, both the S-Learner and DML estimators are used in combination with linear and random forest-based nuisance estimators. DML (Linear) uses standard linear regression for the estimation of the outcome regression function, and logistic regression with $L_2$-penalty for the estimation of the propensity score. Both nuisance function estimators are \ILT-invariant. DML (Lasso) uses $L_1$-penalized linear and logistic regression with cross-validated penalty parameter selection for the outcome regression and propensity score estimation, respectively. The S-Learner (Linear) and S-Learner (Lasso) use unpenalized and $L_1$-penalized linear regression for the outcome regression, respectively. The random forest-based nuisance estimation (both for DML and S-Learner) is based on the standard random forest implementation from \textit{scikit-learn}. The number of estimated trees is varied in certain experiments to improve numerical stability. 

In the \textit{Complex Confounding} experiments, we also use neural network-based nuisance estimators for DML and the S-Learner. For this purpose, we employed neural networks with a depth of 100 and a width of 50 while using ReLU activation and Adam for optimization. While DML (NN) and S-Learner (NN) use neural networks for the outcome regression, logistic regression is employed in DML (NN) for propensity score estimation to enhance numerical stability. Generally, DML is used with sample splitting and with two folds for cross-validation. For the S-Learner and DML the Python packages \textit{CausalML} \cite{chen2020causalml} and \textit{DoubleML} \citep{bach2022doubleml} are used, respectively. 

\paragraph{Partially Linear Model and Orthogonal Scores}

In our experiments, we simulated two different types of confounding. %, which we refer to as \textit{Label Confounding} and \textit{Complex Confounding}. %Details are provided in \cref{app:confounding}. 
In both cases we use non-tabular data to adjust for this confounding, given that the confounding inducing information is contained in this data source, but not available otherwise. However, as this information is non-linearly embedded in the non-tabular data, the model that we aim to estimate follows the structure of a so-called \textit{partially linear model} (PLM). Given a binary treatment variable $T$, the PLM is a special case of the more general confounding setup that we consider in the theoretical discussion of this paper. Specifically, the PLM considers the case where the outcome regression function in \eqref{eq:outcome_reg} decomposes as 
\begin{equation}\label{app:eq_plm}
    g(T, W) = \E[Y |T, W] = \theta_{0} T + \tilde{g}(W).
\end{equation}
The structure of the propensity score in \eqref{eq:prop_score} remains the same. The parameter $\theta_{0}$ in the PLM corresponds to the target parameter considered in \eqref{eq:ate}, namely the ATE. In their theoretical investigation, \citet{chernozhukov2018double} discuss ATE estimation both in the partially linear model and in the more general setup, which they refer to as \textit{interactive model}. % The latter is a generalization of the PML in case ATE estimation with binary treatment is considered. 
Given that we consider the more general case in \cref{sec:downstream_inf}, the orthogonalized score stated in this section matches that of \citet{chernozhukov2018double} for the ATE in the interactive model. In case of the PLM, \citet{chernozhukov2018double} consider two other orthogonalized scores, one of which is the so-called \textit{partialling-out} score function, which dates back to \citet{robinson1988root}. The partialling-out score corresponds to an unscaled version of the ATE score in the (binary) interactive model in case the outcome regression decomposes as in \eqref{app:eq_plm}. The scaling is based on certain estimated weights. Therefore, score functions as the partialling-out score are sometimes referred to as \textit{unweighted} scores \cite{young2024rose}. While the theoretical result in \cref{thm:dml} could also be obtained for DML with partialling-out score under similar assumptions, the key requirement being again \eqref{eq:dml-cond}, the approach may not be asymptotically efficient given that it does not use the efficient influence function. However, the potential loss of asymptotic efficiency is often outweighed by increased robustness in finite-sample estimation when using unweighted scores, which has contributed to the popularity of approaches such as the partialling-out method in practice (\citet[§25.9]{van1998asymptotic}, \citet[§2.2.4]{chernozhukov2018double}, \citet{young2024rose}). Accordingly, we also adapted the partialling-out approach in the DML-based ATE estimation in our experiments.

\subsection{Intrinsic Dimensions of Pre-trained Representations}\label{app:id_estimates}
In \cref{sec:ldim_manifold} we also provide empirical evidence that validates the hypothesis of low intrinsic dimensions of pre-trained representations. For this, we use different pre-trained models from the from the TorchXRayVision library \citep{cohen2022torchxrayvision}. All of these are trained on chest X-rays and use a Densenet-121 \cite{huang2017densely} architecture. Given the same architecture of the models, the dimension of the last layer hidden features is 1024 for all models. The different names of the pre-trained models on the x-axis in \cref{fig:intrin} indicate the dataset they were trained on. We use the 3,769 chest X-rays from the X-rays dataset described above and pass these through each pre-trained model to extract the last layer features of each model, which we call the pre-trained representations of the data. Subsequently, we use standard intrinsic dimension estimators such as the Maximum Likelihood Estimator (MLE) \cite{levina2004maximum}, the Expected Simplex Skewness (ESS) estimator \cite{johnsson2015ess}, and the local Principal Component Analysis (lPCA) estimator \cite{fukunaga1971algorithm}, with a choice of number of neighbors set to 5, 25 and 50, respectively. While the intrinsic dimension estimates vary by the pre-trained model and the intrinsic dimension estimator used, the results indicated that the intrinsic dimension of the pre-trained representations is much smaller than the dimension of the ambient space (1024).

\subsection{Double Machine Learning with Convolutional Neural Networks as Nuisance Estimators}\label{app:info_dml_cnn}

In \cref{sec:exp_pretraining}, we compare DML with pre-trained neural networks against DML without pre-trained neural networks. This experiment investigates the benefits of pre-training for nuisance estimation in the context of DML-based ATE estimation. Experiments are conducted on the X-Ray dataset, and confounding is simulated based on \textit{Label Confounding}. DML (Pre-trained) uses the same pre-trained Densenet-121 from the TorchXRayVision library \citep{cohen2022torchxrayvision} that was previously used as pre-trained neural feature extractors in the other image-based experiments. Building on this pre-trained feature extractor, DML (Pre-trained) then uses linear models on the pre-trained features for the nuisance function estimation. In contrast, DML without a pre-trained feature extractor uses standard Convolutional Neural Networks (CNNs) to estimate the nuisance functions directly on top of the images. The experiment of \cref{fig:dml_cnn} uses a five-layer CNN with 3$\times$3 convolutions, batch normalization, ReLU activation, and max pooling, followed by a model head consisting of fully connected layers with dropout. Training uses Adam optimization with early stopping. When the network is utilized for propensity score estimation, the outputs are converted to probabilities via a sigmoid activation. Both DML with and without pre-trained feature extractors use the ``partialling-out'' approach in combination with sample-splitting for doubly-robust ATE estimation.

\subsection{Computational Environment}

All computations were performed on a user PC with Intel(R) Core(TM) i7-8665U CPU @ 1.90GHz, 8 cores, and 16 GB RAM. Run times of each experiment do not exceed one hour. The code to reproduce the results of the experiments can be found at \url{https://github.com/rickmer-schulte/Pretrained-Causal-Adjust}.

\section{Further Experiments} \label{app:further_exp}

This section provides additional results from experiments that extend those discussed in the main body of the paper.

\subsection{Comparison of ATE Estimators}\label{app:ate_comp}

The results depicted in \cref{fig:pneu2} and \cref{fig:imdb2} complement \cref{fig:imdb} and \cref{fig:pneu} that are discussed in \cref{sec:experiments}. 

\paragraph{Label Confounding (X-Ray)} The results for the \textit{Label Confounding} simulation based on the X-Ray dataset over 5 simulations are depicted in \cref{fig:pneu2}. As before, the naive estimator shows a strong negative bias. Similarly, the S-Learner (for all three types of nuisance estimators) and for DML using random forest or lasso exhibit a negative bias and too narrow confidence intervals. In contrast, DML using linear nuisance estimator (without sparsity-inducing penalty) yields less biased estimates with good coverage due to its properly adapted confidence intervals.

%\vspace*{1mm}
\begin{figure}[ht!]
    \centering
    \includegraphics[width=0.6\textwidth]{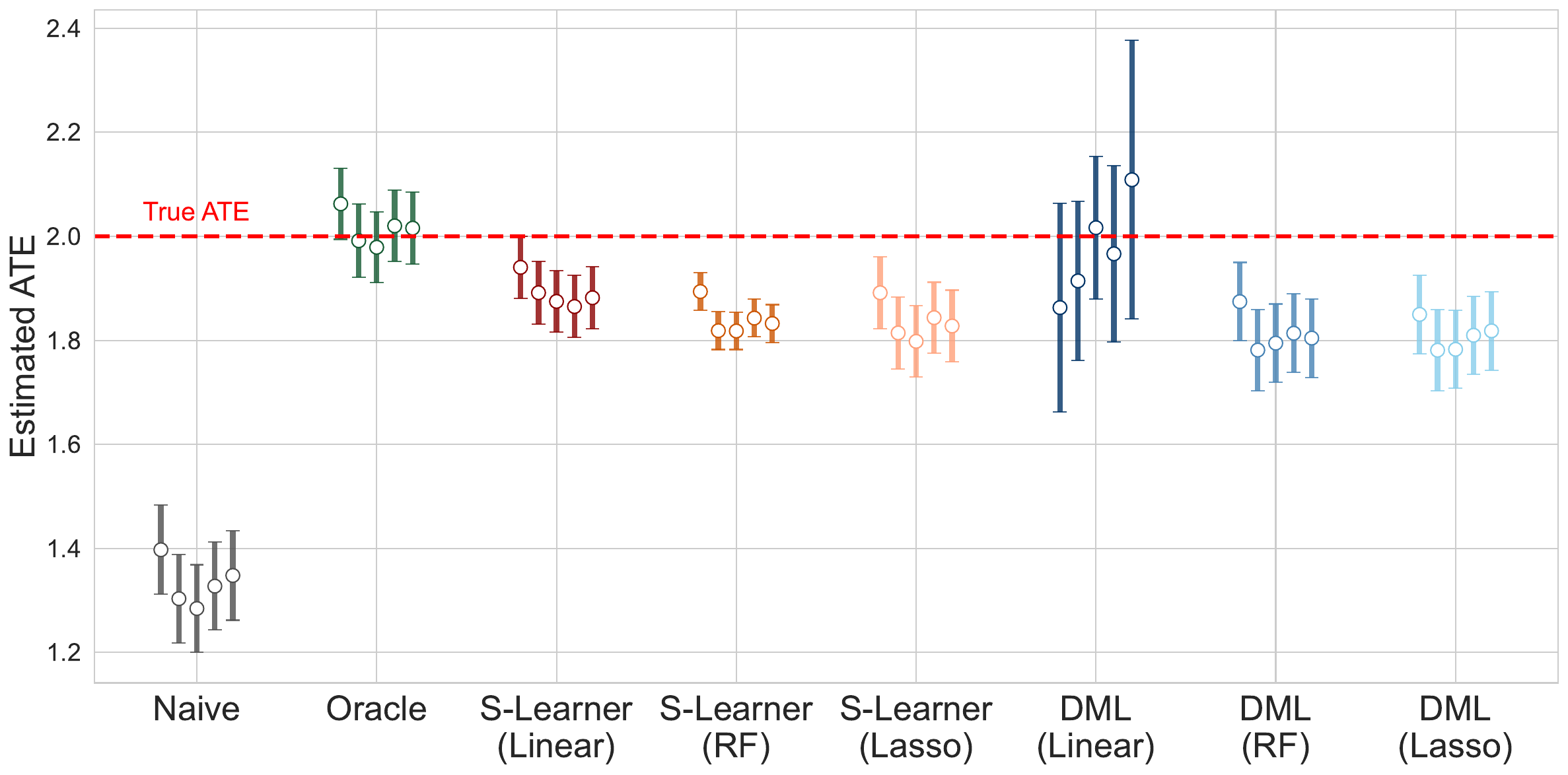}
    %\vspace*{-5mm}
    \caption{\textit{Label Confounding (X-Ray): Comparison of ATE estimators on the X-Ray dataset. DML \& S-Learner use pre-trained representations and three types of nuisance estimators: linear models without $L_1$-penalization (Linear), linear models with $L_1$-penalization (Lasso), as well as random forest (RF). Point estimates and 95\% CIs are depicted.
    }}\label{fig:pneu2}
\end{figure}

\paragraph{Complex Confounding (IMDb)} A similar pattern can be observed for the \textit{Complex Confounding} setting on the IMDb data depicted in \cref{fig:imdb2}. The naive estimator and both of the random forest-based ATE estimators exhibit strong bias. In contrast, both neural network-based estimators show very little bias. This provides further evidence that neural networks can adapt to the low intrinsic dimension of the data. However, unlike the DML estimator, the S-Learner still produces overly narrow confidence intervals and thus has poor coverage. As in the example discussed in the main body of the text, the DML (NN) estimator is the only one that yields unbiased estimates and valid inference.

%\vspace*{1mm}
\begin{figure}[ht!]
    \centering
    \includegraphics[width=0.6\textwidth]{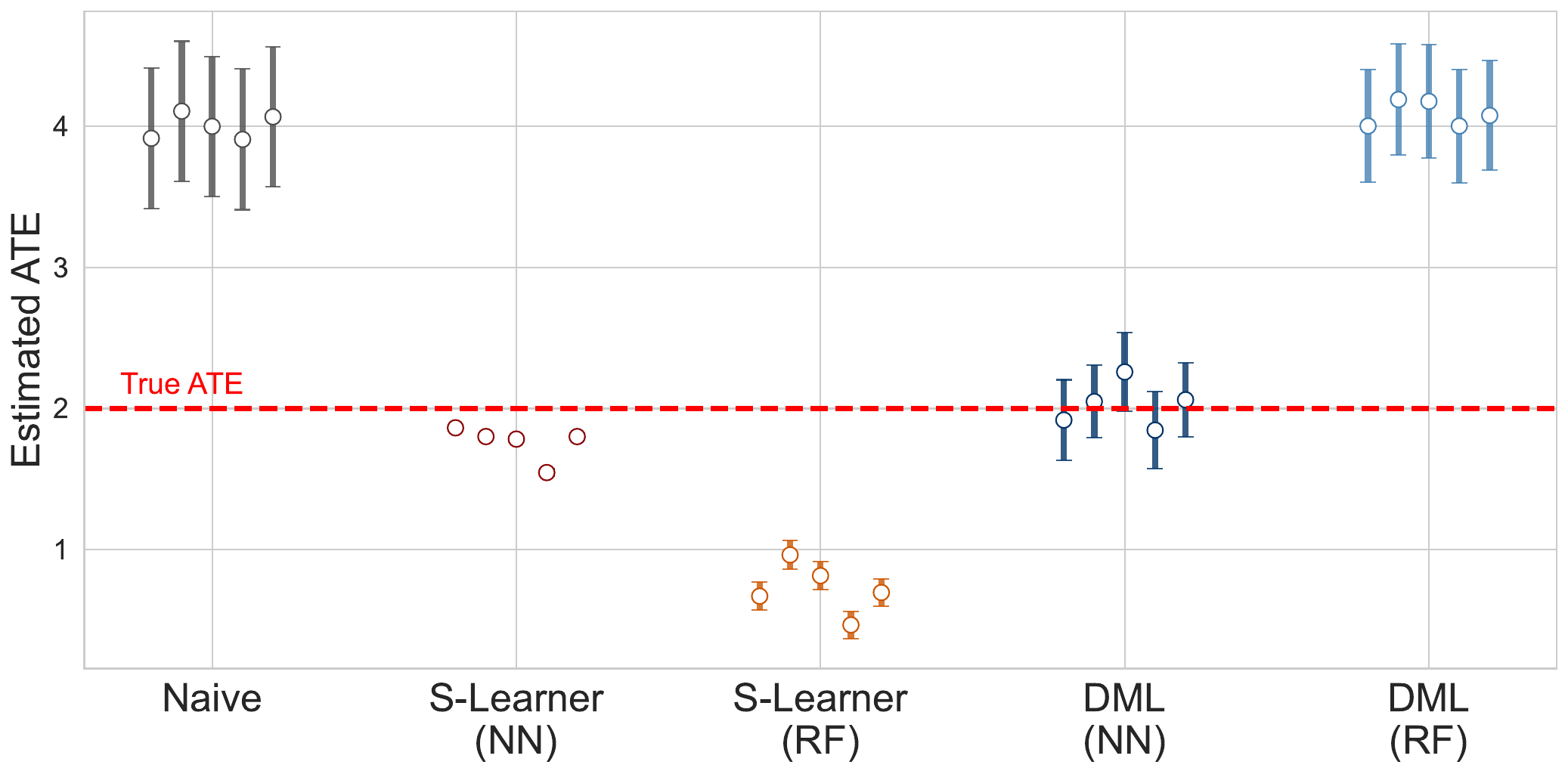}
    %\vspace*{-5mm}
    \caption{\textit{Complex Confounding (IMDb): Comparison of ATE estimators on the IMDb dataset. DML \& S-Learner use pre-trained representations and either neural network (NN) or random forest (RF) based nuisance estimators. Point estimates and 95\% CIs are depicted.
    }}\label{fig:imdb2}
\end{figure}

\subsection{DML with and without Pre-Training}\label{app:exp_dml_cnn}

The experiment on DML with and without pre-trained representations explored the benefits of pre-training for DML and was discussed in \cref{sec:exp_pretraining}. We extend this line of experiment by considering different sample sizes, as well as neural network architectures for the non-pre-trained model. While the \textit{DML (CNN)} estimator in \cref{fig:dml_cnn} uses five-layer CNNs for the nuisance function estimation, the \textit{DML (CNN)} estimator in \cref{fig:app_dml_cnn} uses a slightly simpler model architecture of two-layer CNNs with ReLU activation and max-pooling, followed by fully connected layers as model head. The slightly less complex model architecture requires fewer neural network parameters to be trained, which might be beneficial in the context of the X-Ray dataset, considering the comparably small sample size available for model training. 

Overall, \cref{fig:app_dml_cnn} confirms the previous finding that DML with pre-trained representations performs much better than DML without pre-training. While the former yields unbiased ATE estimates, the ATE estimates of the latter show a strong negative bias. As the two plots show, this result is independent of the model architecture and sample size used.

\begin{figure}[htbp]
    \centering
    \begin{subfigure}[b]{0.49\linewidth}
        \centering
        \includegraphics[width=\linewidth]{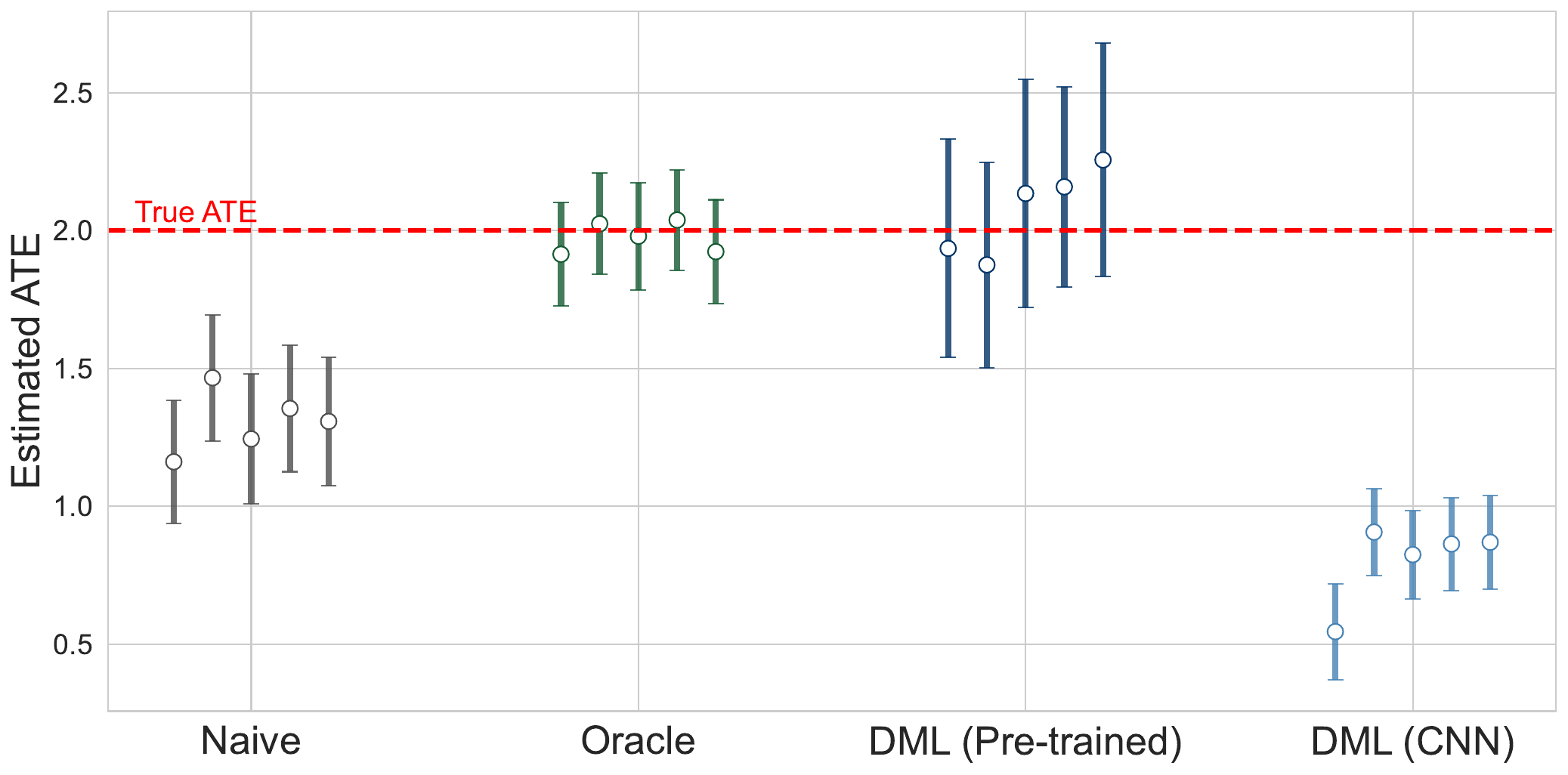}
        %\caption{Caption for Figure 1}
        \label{fig:fig1}
    \end{subfigure}
    \hfill
    \begin{subfigure}[b]{0.49\linewidth}
        \centering
        \includegraphics[width=\linewidth]{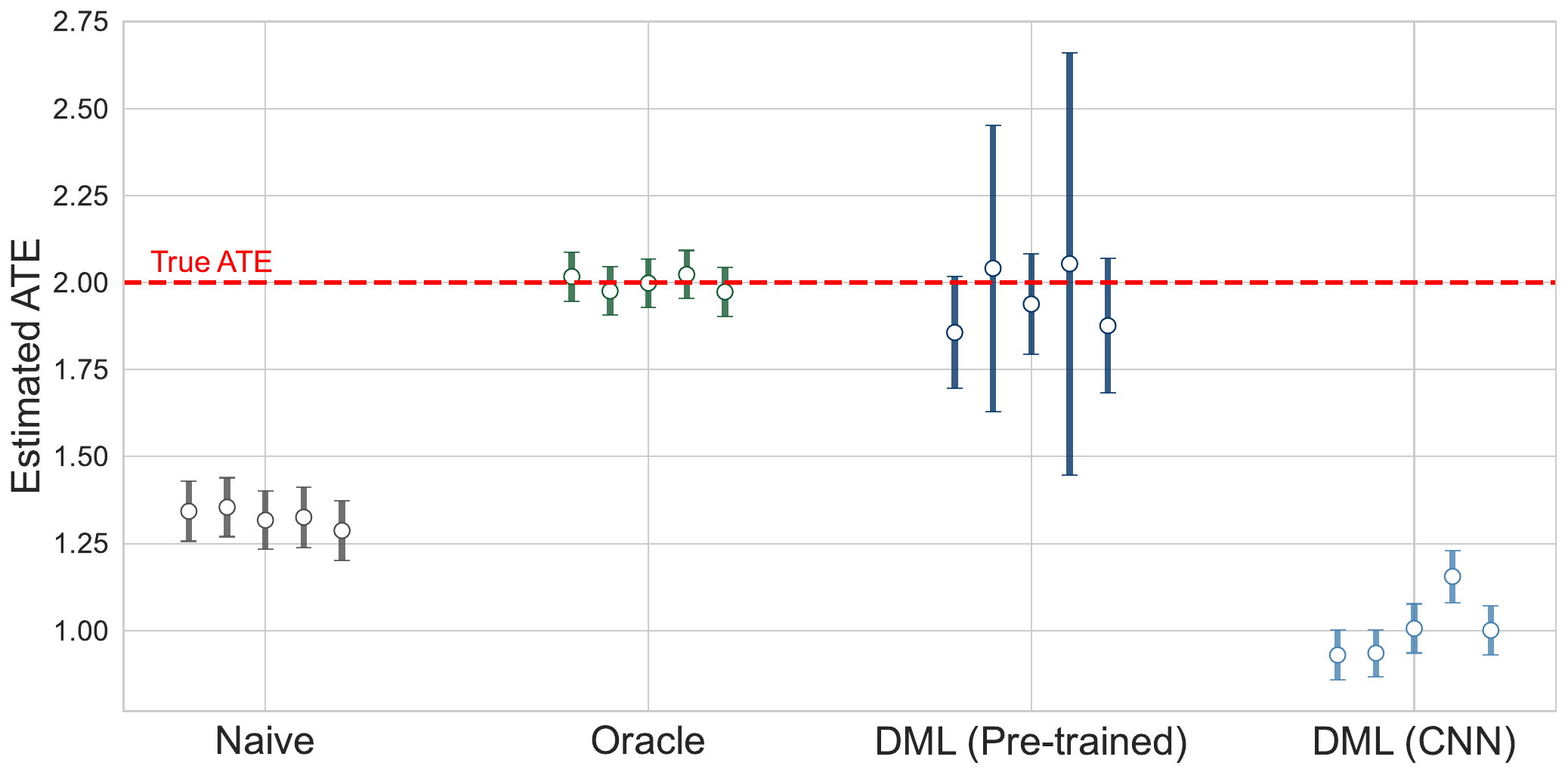}
        %\caption{Caption for Figure 2}
        \label{fig:fig2}
    \end{subfigure}
    \caption{\textit{DML with/out Pre-Training: Comparison of DML using pre-trained representations ``DML (Pre-trained)" and DML without pre-training ``DML (CNN)" for ATE estimation using 500 (Left) and all 3769 (Right) images from the X-Ray dataset. Point estimates and 95\% CIs are depicted.
    }}
    \label{fig:app_dml_cnn}
\end{figure}

\newpage
\subsection{Asymptotic Normality of DML ATE Estimation}\label{app:asym_norm}

This line of experiments explores the asymptotic normality of the DML estimator. For this purpose, we extend the ATE estimation experiments of \cref{fig:pneu2} with \textit{Label Confounding} and \cref{fig:pneu} with \textit{Complex Confounding} that are based on the X-Ray dataset. While the two figures depict the estimates of different ATEs over 5 simulation iterations, we repeat both experiments with 200 iterations and collect the ATE estimates. We standardize each estimate and plot the corresponding empirical distribution. The results are shown in \cref{fig:ate_asym_norm}. The left plot depicts the empirical distribution of the 200 standardized point estimates of the \textit{Oracle} and \textit{Naive} estimator, as well as \textit{DML} with linear nuisance estimator in the \textit{Label Confounding} experiment. The right plot displays the empirical distribution of the 200 standardized point estimates of the \textit{Naive}, \textit{S-Learner}, and \textit{DML} estimator in the \textit{Label Confounding} experiment. The latter two use neural network-based nuisance estimation. While the distributions of the \textit{Naive} and \textit{S-Learner} estimators show a strong bias, the distribution of the DML approach matches the theoretical standard normal distribution in both experiments.

\begin{figure}[htbp]
    \centering
    \begin{subfigure}[b]{0.49\linewidth}
        \centering
        \includegraphics[width=\linewidth]{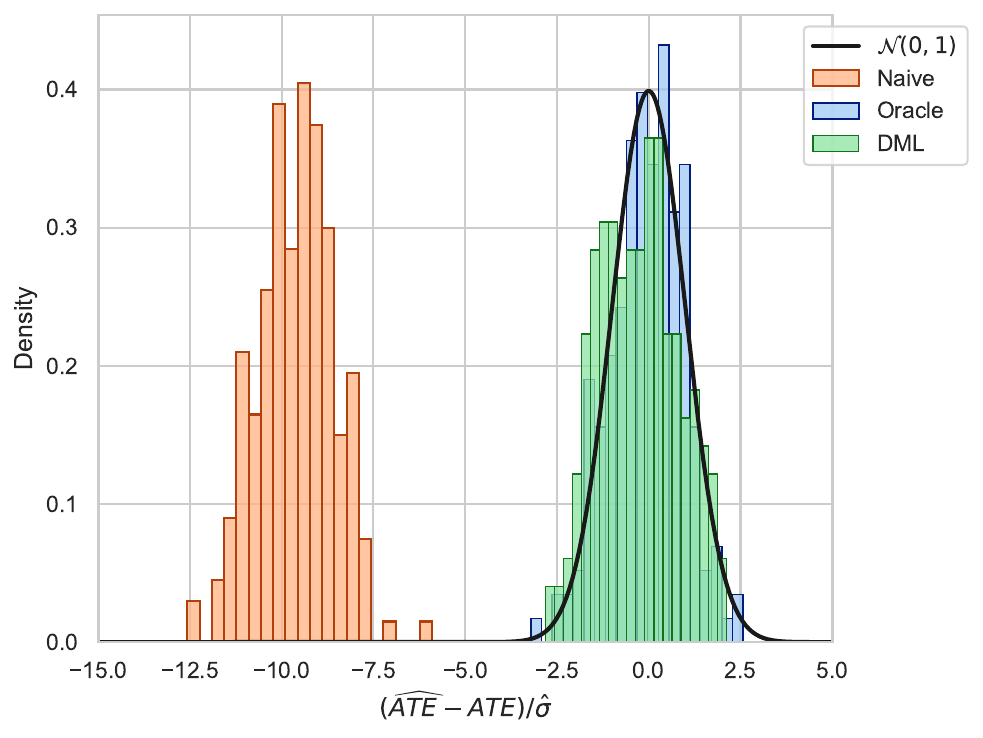}
    \end{subfigure}
    \hfill
    \begin{subfigure}[b]{0.49\linewidth}
        \centering
        \includegraphics[width=\linewidth]{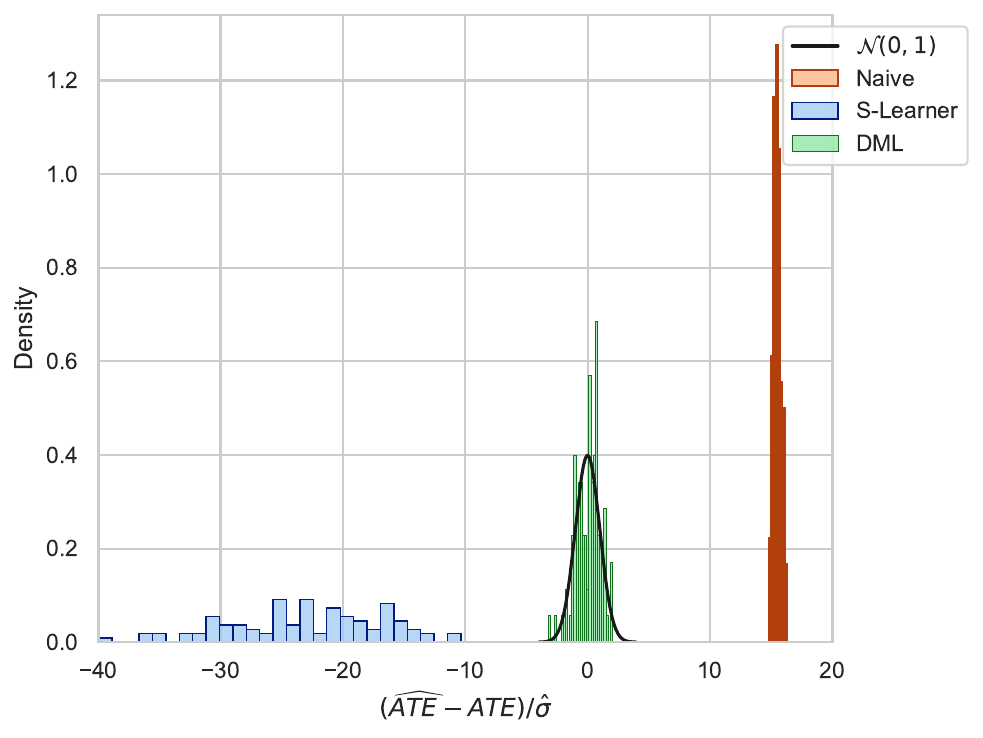}
    \end{subfigure}
    \caption{\textit{Asymptotic Normality of DML: Comparison of the empirical distributions of standardized point estimates of different ATE estimators on the X-ray dataset. Left: Distribution of 200 standardized point estimates of the Naive, Oracle, and DML with linear nuisance estimation from the Label Confounding experiment. Right: Distribution of 200 standardized point estimates of the Naive, S-Learner with NN-based nuisance estimation (S-Learner) and DML with NN-based nuisance estimation from the Complex Confounding experiment.}}
    \label{fig:ate_asym_norm}
\end{figure}

\subsection{Effects of the Hierarchical Composition Model (HCM) Structure on Estimation}\label{app:ate_hcm}

In this experiment, we investigate the effect of the Hierarchical Composition Model (HCM) structure in the context of ATE estimation. The HCM was formally introduced in \cref{def:hcm} and later used in \cref{thm:rate} to derive convergence rates of neural network-based estimation. This result showed that the convergence rate of neural networks is determined by the worst-case pair that appeared in the constraint set of the HCM. The core benefit of the HCM in our context is, that it constitutes a very flexible class of functions, while at the same time, it enables to obtain fast convergence rates in case the target function factors favorably according to the hierarchical structure of the HCM. The latter could be fulfilled in case each composition in the hierarchical structure only depends on a few prior compositions. This structural sparsity would improve the worst-case pair in the constraint set and thereby could allow for obtaining fast convergence rates. 

Now, in the DML ATE estimation, one might be interested in what happens when the nuisance functions do not factor according to the HCM structure, such that sufficiently fast convergence rates for the nuisance estimation, required in the ATE estimation, can be achieved. We simulate such a scenario by extending the previously introduced setup of \textit{Complex Confounding}. Instead of simulating confounding based on low-dimensional encodings from a AE trained on the pre-trained representations from the X-Ray experiments (as done in some of the previous experiments), we do this directly based on the pre-trained representations. For this purpose, we define the nuisance functions (outcome regression and propensity score) to depend on the \textit{product of all features} in each pre-trained representation. Hence, the nuisance functions depend on all 1024 features, thereby mimicking the \textit{curse of dimensionality} scenario discussed in \cref{sec:est_prob}. Further, the nuisance functions are constructed such that bias is introduced in the ATE estimation. In the following, we estimate the same set of ATE estimators that were previously used in the \textit{Complex Confounding} experiments.

The results are depicted in \cref{fig:ate_hcm}. All estimators show substantial bias (even the DML approach), given that none of the estimators is able to properly adapt to this complex type of confounding structure. The results are a validation of the fact that no (nuisance) estimator can escape the \textit{curse of dimensionality} without utilizing certain beneficial structural assumptions. 
\begin{figure}[htbp]
    \centering
    \includegraphics[width=0.6\linewidth]{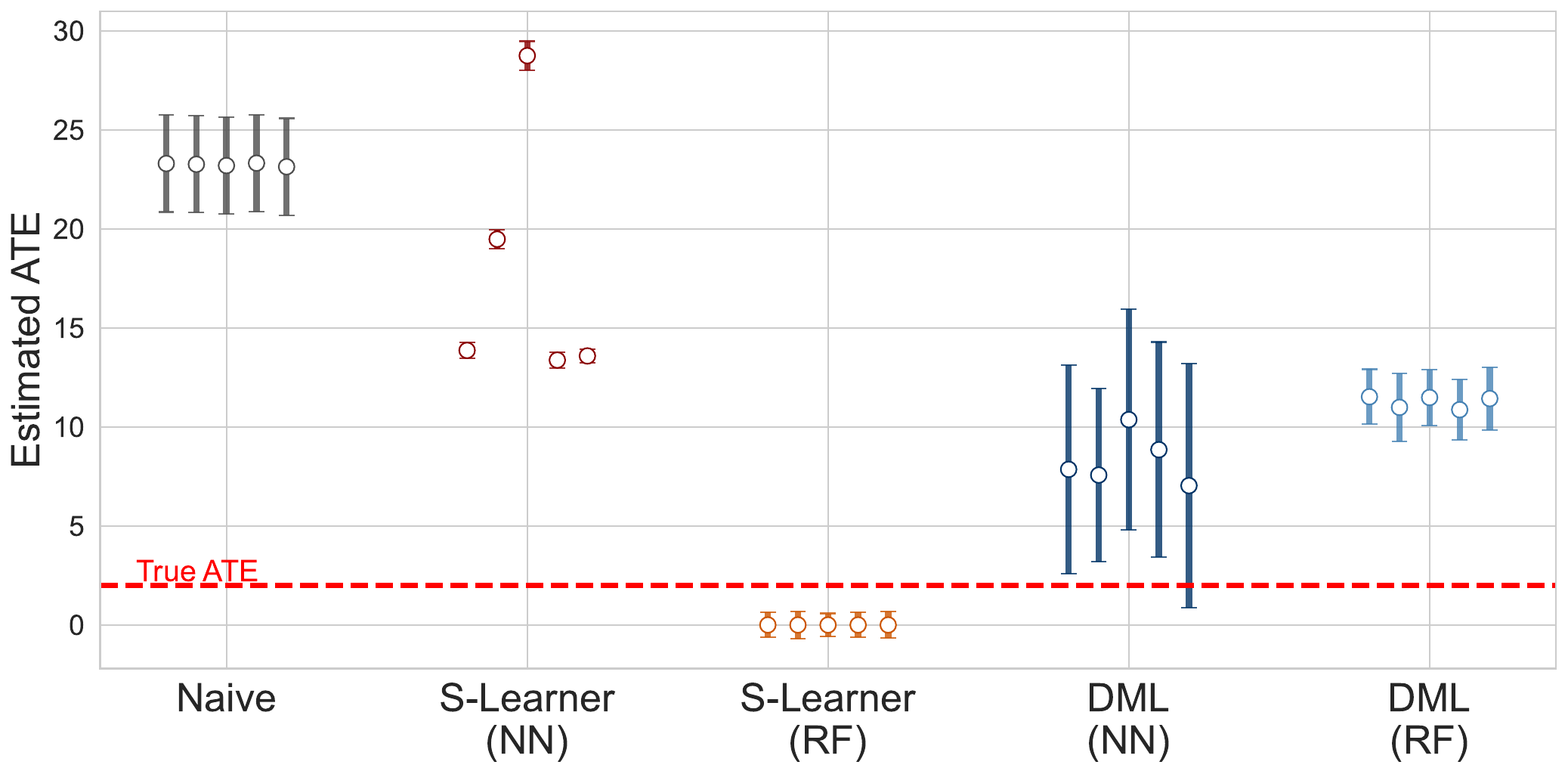}
    \caption{\textit{Comparison of ATE estimators: DML \& S-Learner use pre-trained representations and either neural network (NN) or random forest (RF) based nuisance estimators. Point estimates and 95\% CIs are depicted.}}
    \label{fig:ate_hcm}
\end{figure}

\end{document}